\newcommand{\mytitle}{A Framework for Approval-based \\ Budgeting Methods}

\widowpenalty10000
\clubpenalty10000

\newcommand{\shortcite}[1]{\cite{#1}}
\newcommand{\journal}[1]{}

\documentclass[letterpaper]{article}
\usepackage{times}  
\usepackage{helvet}  
\usepackage{courier}  
\usepackage{url}  
\usepackage{graphicx}  

\usepackage{amsmath,amsfonts,amsthm}
\usepackage{makecell}
\usepackage{xcolor}
\usepackage{bbm}
\usepackage{nicefrac}
\usepackage{xspace}
\usepackage{pifont}

\theoremstyle{definition}
\newtheorem{definition}{Definition}
\newtheorem{proposition}{Proposition}
\newtheorem{observation}{Observation}
\newtheorem{corollary}{Corollary}
\newtheorem{example}{Example}
\newtheorem{remark}{Remark}

\DeclareMathOperator*{\argmax}{arg\,max}

\newcommand{\mypara}[1]{\bigskip\noindent\textbf{#1}}

\newcommand{\np}{{{\mathrm{NP}}}}

\newcommand{\xmark}{\text{\sffamily x}}
\newcommand{\cmark}{\checkmark}

\begin{document}

\pagestyle{plain}

\title{\mytitle}

\author{
  Piotr Faliszewski \\
  \small{AGH University} \\
  \small{Krakow, Poland}  
  \and
  Nimrod Talmon \\
  \small{Ben-Gurion University} \\
  \small{Be'er Sheva, Israel}
}

\maketitle

\begin{abstract}
We define and study a general framework for approval-based budgeting methods and compare certain methods within this framework by their axiomatic and computational properties. Furthermore, we visualize their behavior on certain Euclidean distributions and analyze them experimentally.
\end{abstract}

\section{Introduction}

Participatory budgeting~\cite{cabannes2004participatory}, initiated by the Brazil workers' party~\cite{wainwright2003making}, is gaining increased attention, and is currently applied on many continents, including North America~\cite{Gilm12a} and Europe (e.g., Paris is organizing one of the largest citywide participatory budgeting processes\footnote{\url{https://budgetparticipatif.paris.fr/bp}}).
The general premise of participatory budgeting is to let residents of a municipality influence the way by which their common funds are being distributed, through a deliberative grassroots process.
Concretely, residents are participating in constructing the municipal budget, by acting as voters and specifying their preferences over a set of available items; then, an aggregation mechanism (i.e., a budgeting method) is applied to decide upon the exact set of items to be funded.

Even though more and more funds are decided through participatory budgets, not many budgeting methods have been proposed, and no mathematical frameworks to allow for a systematic comparison between budgeting methods are available, rendering their use somewhat ad-hoc.
Here we describe such a general framework for the approval-based setting, in which voters specify subsets of the available items which they approves of. Corresponding methods within our framework differentiate in two aspects:
(1) The way by which voter satisfaction from a given set of funded items is defined, modeled through a \emph{satisfaction function};
and
(2) The way these satisfaction functions are used.

We consider several concrete methods within our framework, including some which are used in practice and
some which generalize  known multiwinner voting rules.
To compare these methods, we 
(1) consider their computational complexity;
(2) define several axioms, relevant to budgeting methods, and study how well these axioms are satisfied by the methods at hand;
and
(3) report on three experiments:
  In the first two, we visualize the behavior of these methods on certain Euclidean preferences,
  by adapting the methodology of Elkind et al.~\shortcite{elkind2017multiwinner}, originally developed for multiwinner voting rules;
  in the third experiment we assess how well our budgeting methods deal with local and global items.

The main contributions of our work are:
(1) a general framework of approval-based budgeting methods and highlight several methods within this framework;
(2) several useful axiomatic properties which are relevant to budgeting methods at large;
(3) an adaptation of the methodology of Elkind et al.~\shortcite{elkind2017multiwinner} to participatory budgeting;
and
(4) an evaluation of certain methods within our framework according to their axiomatic, computational, and visual properties,  as well as their ability to deal with local and global items.

\subsection{Related Work}

%
Researchers have considered ordinal-based budgeting methods, in which voters rank budget items~\cite{goel2015knapsack,condorcetbudgeting};
utility-based budgeting methods, in which voters have numerical utilities over budget items~\cite{fluschnik2017fair,benade2017preference};
and, as we do, approval-based budgeting methods, in which each voter approves a set of items~\cite{goel2015knapsack,goel2016budget,benade2017preference,aziz2017proportionally}.
Specifically, Goel et al.~\shortcite{goel2015knapsack,goel2016budget} study
$k$-Approval, where each voter approves exactly $k$ items,
and Knapsack voting, where each voter approves items while respecting the budget limit.
As shown below, the aggregation method used by Goel et al.,
in which the winning budget is selected by greedily considering the items in decreasing number of approvals,
fits within our framework.

Benade et al.~\shortcite{benade2017preference} consider the implicit utilitarian model~\cite{caragiannis2017subset} and analyze the distortion achieved by eliciting the preferences of the voters by Knapsack voting and by Threshold voting
(in which voters are asked to approve those items which they evaluate above a given threshold);
their distortion-based aggregation methods do not fit within our framework.
Aziz et al.~\shortcite{aziz2017proportionally} generalize multiwinner proportionality axioms to the setting of participatory budgeting.
Their budgeting method, which is a generalization of Phragm\'{e}n's sequential method for committee elections,
does not fit within our framework.

As we are interested in axiomatic properties of budgeting methods,
we mention the work of Shapiro and Talmon~\shortcite{condorcetbudgeting},
which considers a generalization of the Condorcet principle,
the paper of Aziz et al.~\shortcite{aziz2017proportionally},
which considers axioms of representation,
and the papers of Fluschnik et al.~\shortcite{fluschnik2017fair} and Fain et al.~\shortcite{fain2018fair},
which also consider representation.
Our framework of approval-based budgeting methods is a generalization of the framework studied by Lackner and Skowron~\shortcite{ABCMW},
which regards approval-based multiwinner methods.
Our framework, as well as theirs, might be seen as the approval-based variant of the framework of committee scoring rules~\cite{elkind2017properties,faliszewski2016multiwinner,csrhierarchy}.

\section{Budgeting Methods}


\subsection{Budgeting Scenarios}

We consider the following model of participatory budgeting.
A \emph{budgeting scenario} $E$ is
a tuple $E = (A, V, c, \ell)$
where $A = \{a_1, \ldots, a_m\}$ is a set of items,
$c \colon A \to \mathbb{N}$ is a \emph{cost function}, so that the \emph{cost} of item $a \in A$ is $c(a)$,
$V = \{v_1, \ldots, v_n\}$ is a set of voters,
where each voter $v \in V$ specifies her approval set $A_v \subseteq A$, containing those items which she approves of,
and $\ell \in \mathbb{N}$ is a budget limit.

A \emph{budgeting method} $\mathcal{R}$ is a function which
takes a budgeting scenario $E = (A, V, c, \ell)$
and returns a \emph{budget} $B \subseteq A$, such that the total cost of the items of $B$ respects the budget limit;
i.e., slightly abusing notation, it must hold that $c(B) = \sum_{b \in B} c(b) \leq \ell$.
The winning budget (i.e., the set of funded items) for a budgeting scenario $E$, under a budgeting method $\mathcal{R}$, is denoted by $\mathcal{R}(E)$.
With respect to a winning budget $B$, an item is \emph{budgeted} (or \emph{funded}) if it is contained in $B$.
For simplicity, we ignore issues related to tie-breaking, which can be dealt with using standard techniques.

\subsection{Satisfaction Functions}

Budgeting methods in our framework operate by considering the satisfaction of the voters from possible budgets.
To measure the satisfaction of a voter from a possible winning budget, we use satisfaction functions, defined below.

\begin{definition}[Satisfaction function]
A \emph{satisfaction function} $f$ is a function $f : 2^A \times 2^A \to \mathbb{N}$,
where $A$ is a set of items.
Given a budgeting scenario with a set of items $A$,
a voter $v \in V$ with her approval set $A_v$,
and a budget $B \subseteq A$,
the value $f(A_v, B)$ is the \emph{satisfaction} of $v$ from the budget $B$.
\end{definition}

\begin{example}
Consider a budgeting scenario with $A = \{a, b, c\}$,
a voter $v$ with $A_v = \{a, b\}$,
and a budget $B = \{b, c\}$.
For a satisfaction function $f$,
the value $f(A_v, B) = f(\{a, b\}, \{b, c\})$ is the satisfaction of $v$ from the budget $B$.
\end{example}

Next we describe a few satisfaction functions.
For a budget $B$ and a voter $v$ with her approval set $A_v$,
let $B_v := A_v \cap B$.

\begin{enumerate}

\item 
$f(A_v, B) = |B_v|$: The satisfaction of a voter is the number of budgeted items she approves of.

\item
$f(A_v, B) = \sum_{a \in B_v} c(a) = c(B_v)$: A voter's satisfaction is the total cost of her approved items which are budgeted.

\item
$f(A_v, B) = \mathbbm{1}_{|B_v| > 0}$: A voter has satisfaction $0$ if none of her approved items is budgeted, and $1$ otherwise (i.e., if at least one of her approved items is budgeted).

\end{enumerate}

\journal{%
\begin{remark}
Similarly to the syntactic hierarchy of committee scoring rules~\cite{csrhierarchy},
we mention a syntactic hierarchy of satisfaction functions:
  At the top there are those which depend on $A_v$ and on $B$;
  then, those which depend on $A_v$ and on $B_v$ (e.g., all the above examples are such);
  last, those which depend on $A_v$ and on $|B_v|$ (e.g., the first two examples above)
  and those which depend on $A_v$ and on $\{c(a) | a \in B_v\}$ (e.g, the last two examples above).
\end{remark}
}

\subsection{Using Satisfaction Functions}

A given satisfaction function can be used in various ways.
We consider the following three approaches.

\begin{enumerate}

\item 
\textbf{Max rules:}
For a satisfaction function $f$, the rule $\mathcal{R}^m_f$ selects,
as a winning budget,
a budget which maximizes the sum of voters' satisfaction, according to $f$.
Formally, $\mathcal{R}^m_f$ selects $\argmax_{B \subseteq A} \sum_{v \in V} f(A_v, B)$.

\item
\textbf{Greedy rules:}
For a satisfaction function $f$,
the rule $\mathcal{R}^g_f$ proceeds in iterations,
maintaining a partial budget $B$,
where in each iteration it adds an item $a$ to $B$ which maximizes the value $\sum_{v \in V} f(A_v, B \cup \{a\})$.

\item
\textbf{Proportional greedy rules:}
The rule $\mathcal{R}^p_f$ is similar to $\mathcal{R}^g_f$,
except that in each iteration it adds an item $a$ to $B$ which maximizes:
\[
  \left(\textstyle\sum_{v \in V} f(A_v, B \cup \{a\}) - \sum_{v \in V} f(A_v, B)\right) /\ c(a).
\]

\end{enumerate}

\begin{remark}
The above examples of satisfaction functions and approaches result in $9$ budgeting methods
which we discuss throughout the paper.
Indeed,
studying other functions and approaches is an immediate future work direction.
We chose those functions and approaches as they are natural, generalize known multiwinner voting rules, and include some known budgeting methods:
  $\mathcal{R}^m_{\mathbbm{1}_{|B_v| > 0}}$ generalizes approval-based Chamberlin--Courant~\cite{chamberlincourant} and $\mathcal{R}^g_{\mathbbm{1}_{|B_v| > 0}}$ generalizes the greedy approximation of this rule~\cite{lu2011budgeted}.
Furthermore, $\mathcal{R}^g_{|B_v|}$ is similar to the popular $k$-Approval and Knapsack voting~\cite{goel2015knapsack} (the aggregation method is the same, albeit $k$-Approval and Knapsack restrict the voter approval sets; we do not consider such restrictions in our framework, as we are interested in the aggregation).
\end{remark}

Below, we study the computational and axiomatic properties of these nine budgeting methods,
and report on simulations.

\newcommand{\smallish}[1]{\small{#1}}

\begin{table*}[t]
\centering
\resizebox{\textwidth}{!}{\begin{tabular}{c | c c c c c c c c c} 
& $\mathcal{R}^m_{|B_v|}$ & $\mathcal{R}^g_{|B_v|}$ & $\mathcal{R}^p_{|B_v|}$ & 
$\mathcal{R}^m_{\mathbbm{1}_{|B_v| > 0}}$ & $\mathcal{R}^g_{\mathbbm{1}_{|B_v| > 0}}$ & $\mathcal{R}^p_{\mathbbm{1}_{|B_v| > 0}}$ & 
$\mathcal{R}^m_{c(B_v)}$ & $\mathcal{R}^g_{c(B_v)}$ & $\mathcal{R}^p_{c(B_v)}$
\\ \hline
\smallish{Complexity} &
P & P & P & NP-h & P & P & weak NP-h & P & P \\
\smallish{Budget Mono.} &
\cmark & \cmark  & \cmark & \cmark & \cmark & \cmark & \cmark & \cmark & \cmark \\
\smallish{Discount Mono.} &
\cmark & \cmark & \cmark & \cmark & \cmark & \cmark & \xmark & \xmark & \xmark \\
\smallish{Splitting Mono.} &
\cmark & \cmark & \cmark & \cmark & \cmark & \cmark & \cmark & \xmark & \cmark \\
\smallish{Merging Mono.} &
\xmark & \xmark & \xmark & \cmark & \cmark & \xmark & \cmark & \cmark & \cmark \\
\smallish{Limit Mono.} &
\xmark & \xmark & \xmark & \xmark & \xmark & \xmark & \xmark & \xmark & \xmark \\
\end{tabular}}
\caption{Computational and axiomatic properties of certain approval-based budgeting methods.}
\label{table:axioms}
\end{table*}

\section{Budgeting Algorithms}

We consider the computational complexity of identifying winning budgets.
First, it follows from the definitions of the Greedy rules and the Proportional greedy rules that computing their winners, given that the functions used can be computed efficiently, can be done in polynomial time; this holds as these rules are defined through efficient iterative processes. This is not the case for Max rules, which in general are NP-hard. (To be concrete, next we consider a specific Max rule; to be formally correct, we show NP-hardness for deciding whether a budget with at least a given total satisfaction, i.e., sum of satisfaction values, exists.)

\begin{observation}\label{observation:nphardone}
  Given a budgeting scenario and a bound~$s$, deciding whether a feasible budget $B$ for which $\sum_{v \in V} \mathbbm{1}_{|B_v| > 0} \geq s$ exists is NP-hard.
\end{observation}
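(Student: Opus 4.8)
The plan is to reduce from a standard NP-complete covering problem, the most natural being Set Cover (or equivalently Dominating Set). Observe that the satisfaction function $\mathbbm{1}_{|B_v| > 0}$ gives each voter a reward of $1$ exactly when the chosen budget $B$ contains at least one item she approves of. Thus $\sum_{v \in V} \mathbbm{1}_{|B_v| > 0}$ counts the number of voters ``covered'' by $B$, and the decision question ``does a feasible budget cover at least $s$ voters?'' is precisely a budget-constrained maximum coverage question. This makes Maximum Coverage the cleanest source problem, since its decision version asks exactly whether $k$ sets can cover at least $s$ elements.

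First I would recall an instance of Maximum Coverage: a universe $U = \{u_1, \ldots, u_n\}$, a family of sets $S_1, \ldots, S_m \subseteq U$, and integers $k$ and $s$, asking whether some $k$ of the sets cover at least $s$ elements of $U$. Then I would build a budgeting scenario as follows: introduce one item $a_j$ for each set $S_j$ and one voter $v_i$ for each universe element $u_i$, where voter $v_i$ approves item $a_j$ if and only if $u_i \in S_j$. I would set every cost $c(a_j) = 1$ and the budget limit $\ell = k$. Under this construction, a feasible budget is exactly a choice of at most $k$ items, i.e.\ at most $k$ sets, and a voter $v_i$ contributes $1$ to the objective precisely when at least one chosen set contains $u_i$. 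Hence $\sum_{v \in V} \mathbbm{1}_{|B_v| > 0} \geq s$ holds if and only if the corresponding $k$ sets cover at least $s$ elements.

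The correctness argument then amounts to checking the two directions of this equivalence, which is immediate from the construction: any covering family of size $\le k$ yields a feasible budget with objective $\ge s$, and conversely. The reduction is clearly computable in polynomial time, giving NP-hardness of the stated decision problem. (Membership in NP is easy but not needed, since the claim is only NP-hardness.)

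I do not expect a serious obstacle here; the construction is direct because the satisfaction function was designed to mimic coverage. The one point requiring mild care is the budget limit: since all costs are $1$, the constraint $c(B) \le \ell = k$ coincides exactly with ``at most $k$ items,'' so the monotone nature of the objective guarantees that an optimal solution uses exactly $k$ items when helpful, matching the Maximum Coverage cardinality bound without any slack. I would make sure the reduction does not accidentally allow a cheaper budget to outperform, but since adding items can only increase $\sum_{v} \mathbbm{1}_{|B_v|>0}$, using the full budget is always at least as good, so this is handled automatically.
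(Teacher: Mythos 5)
Your proof is correct and follows essentially the same route as the paper: the paper's own argument observes that with unit costs the problem is equivalent to Max Cover (and cites NP-hardness of the Chamberlin--Courant rule it generalizes), while you simply spell out that Max Cover reduction explicitly---items for sets, voters for elements, unit costs, limit $\ell = k$---which is a valid and complete instantiation of the same idea.
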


\begin{proof}
Notice that $\mathcal{R}^m_{\mathbbm{1}_{|B_v| > 0}}$ generalizes the CC rule~\cite{chamberlincourant}, which is $\np$-hard~\cite{procaccia2008complexity}.
Also, when all items are of unit cost, the problem is equivalent to Max Cover~\cite{GJ79,sko-fal:j:max-cover}.
\end{proof}

Next we consider other Max rules.

\begin{proposition}\label{proposition:polytimeone}
  Identifying a winning budget under $\mathcal{R}^m_{|B_v|}$ can be done in polynomial time.
\end{proposition}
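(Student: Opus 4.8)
The plan is to show that the quantity maximized by $\mathcal{R}^m_{|B_v|}$ is additive over items, which reduces the task to a Knapsack problem whose item values are small enough to admit an exact polynomial-time dynamic program.

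First I would rewrite the total satisfaction. Fixing a budget $B$ and swapping the order of summation gives
\[
  \sum_{v \in V} |B_v| = \sum_{v \in V} |A_v \cap B| = \sum_{a \in B} |\{v \in V : a \in A_v\}| = \sum_{a \in B} n(a),
\]
where $n(a)$ denotes the number of voters who approve item $a$. Thus the objective decomposes into a sum of per-item contributions $n(a)$, each independent of which other items are selected. The task is therefore exactly: among all $B \subseteq A$ with $c(B) \leq \ell$, find one maximizing $\sum_{a \in B} n(a)$.

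This is an instance of $0/1$ Knapsack, with item values $n(a)$, item weights $c(a)$, and capacity $\ell$. Since Knapsack is NP-hard in general, the key is to exploit that the values here are small. I would solve it by dynamic programming over the value dimension rather than the weight dimension: define $W(V)$ as the minimum total cost of a subset of items whose values sum to exactly $V$ (and $+\infty$ if no such subset exists). Processing items one at a time with the standard recurrence computes all entries $W(V)$; the answer is the largest $V$ with $W(V) \leq \ell$, and the maximizing budget is recovered via back-pointers.

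The crux — and the point that separates this rule from $\mathcal{R}^m_{c(B_v)}$, which is only weakly NP-hard — is the range of the value dimension. Each $n(a)$ is at most $n$, so the total value $\sum_{a \in A} n(a)$ is at most $nm$, which is polynomial in the input size. Hence the table $W(\cdot)$ has only $O(nm)$ entries, and the whole procedure runs in time polynomial in $n$ and $m$, regardless of how large the costs $c(a)$ or the limit $\ell$ are. The main subtlety to get right is exactly this choice of DP dimension: because the costs may be exponentially large in the input length, a DP over cost would only be pseudo-polynomial, whereas the DP over the polynomially bounded value is genuinely efficient.
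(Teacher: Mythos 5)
Your proof is correct and follows essentially the same route as the paper's: reduce to $0/1$ Knapsack and run the dynamic program over the satisfaction (value) dimension, which is polynomially bounded by $n \cdot m$, rather than over the possibly exponential cost dimension. The one point you make explicit that the paper leaves implicit is the additive decomposition $\sum_{v \in V} |B_v| = \sum_{a \in B} n(a)$, which is a nice clarification but not a different argument.
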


\begin{proof}
An efficient algorithm can be realized by noticing the similarity to the \textsc{Knapsack} problem~\cite{GJ79}
and recalling that \textsc{Knapsack} is solvable in polynomial-time whenever one of the dimensions is given in unary.
Specifically, the relevant information for an item $a$ is its number of approvals $S(a)$ and its cost\footnote{%
  In the Knapsack literature,
  these are usually referred to as the \emph{value} and the \emph{weight} of the item;
  we use the jargon of budgeting scenarios and not the Knapsack jargon.}
$c(a)$;
and, while the cost $c(a)$ of any item $a$ is given in binary, and thus can be superpolynomial in the input size,
the number $S(a)$ of approvals of $a$ can be at most $|V|$, and thus polynomial in the input size.
Thus a dynamic program,
that computes the values of $T(i, z)$,
$i \in [m]$, $z \in [n \cdot m]$,
which stand for the cost of the cheapest budget with total satisfaction of at least $z$ from the first $i$ items,
runs in polynomial time.
To be more concrete, it is useful to view the input election as a binary matrix with $n$ rows and $m$ columns, such that the $(i,j)$th cell is $1$ if the $i$th voter approves the $j$th item and $0$ otherwise.
Let $S$ be the number of ones in this representation of the input election as a binary matrix.
Then, we will populate a table $T$ with $m$ rows and $Z$ columns, such that the $(i,z)$th cell would hold the cost of the cheapest budget which achieves a total satisfaction of \emph{exactly} $z$ while considering only the first $i$ items, or $0$ if it is not possible, where by ``not possible'' we mean either that there is no subset of items which would give this exact total satisfaction or that we could not afford any such subset, due to the budget limit.

To populate the first row, $T(1,z)$, for each $z \in Z$ we set $T(1,z)$ to $c(a_1)$ if the total satisfaction of $a_1$ is $z$, and $0$ otherwise.
To populate the $i$th row, $T(i,z)$, for each $z \in Z$ we first compute the total satisfaction $S(j)$ of $a_j$; then, we go over the cells $T(j,z - S(j))$, $j \in [i - 1]$ and locate the cell with the minimum value $v$; if all these cells are $0$ then we set $T(i, z)$ to $0$ as well, while if not then we set $T(i, z)$ to $v + c(a_j)$; now, we go over the cells $T(j,z)$, $j \in [i - 1]$, and if there is a cell whose value is strictly positive but strictly less than $v + c(a_j)$ then we set $T(i,z)$ to this value (this corresponds to not funding the item $a_j$).
After populating the table we go over the columns, starting from $Z$ towards $1$ and locate the first column which is not completely filled with zeros; then, we take the cell with the minimum value from this column which is not zero as the cost of the winning budget (the budget itself can be computed by usual techniques of dynamic programming).
\end{proof}
    
\begin{proposition}\label{proposition:weakone}
  Identifying winning budgets under $\mathcal{R}^m_{c(B_v)}$ can be done in pseudopolynomial time. Further, given a budgeting scenario and a bound $s$, deciding whether there is a feasible budget $B$ with $\sum_{v \in V} f(A_v, B) \geq s$, for $f(A_v, B) = \sum_{a \in B_v} c(a)$, is weakly NP-hard.
\end{proposition}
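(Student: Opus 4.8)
The plan is to reduce both parts to the \textsc{Knapsack} problem. The key observation is that the total satisfaction of a budget telescopes into a sum over funded items: since $f(A_v, B) = c(A_v \cap B) = \sum_{a \in B} c(a) \cdot \mathbbm{1}_{a \in A_v}$, summing over voters gives
\[
  \sum_{v \in V} f(A_v, B) = \sum_{a \in B} c(a) \cdot S(a),
\]
where $S(a) = |\{v \in V : a \in A_v\}|$ is the number of approvals of $a$. Thus finding a winning budget under $\mathcal{R}^m_{c(B_v)}$ is exactly solving a \textsc{Knapsack} instance whose items have weight $c(a)$, value $c(a) \cdot S(a)$, and capacity $\ell$.

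For the pseudopolynomial upper bound I would invoke the standard \textsc{Knapsack} dynamic program. Unlike in Proposition~\ref{proposition:polytimeone}, where the total value $\sum_a S(a) \leq n \cdot m$ is polynomially bounded (yielding a genuine polynomial-time algorithm), here \emph{neither} dimension is small: both the weights $c(a)$ and the values $c(a) \cdot S(a)$ involve the binary-encoded costs and can be superpolynomial in the input size. Nevertheless, a dynamic program indexed by the capacity $\ell$ — computing, for each budget level $w \in \{0, \ldots, \ell\}$ and each prefix of the items, the maximum achievable total satisfaction — runs in time polynomial in $m$ and $\ell$, hence pseudopolynomial; one may equally index by the total value $\sum_a c(a) \cdot S(a)$.

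For weak NP-hardness I would reduce from \textsc{Subset Sum}, a standard (weakly) NP-complete problem: given positive integers $w_1, \ldots, w_k$ (in binary) and a target $W$, decide whether some subset sums to exactly $W$. I construct a budgeting scenario with items $a_1, \ldots, a_k$ of costs $c(a_i) = w_i$, a \emph{single} voter approving all items, budget limit $\ell = W$, and bound $s = W$. With one voter we have $S(a_i) = 1$ for every item, so by the identity above the total satisfaction of any budget $B$ equals $c(B)$. A feasible budget satisfies $c(B) \leq \ell = W$, so a feasible budget with $\sum_{v \in V} f(A_v, B) \geq s = W$ exists precisely when some subset of the costs sums to exactly $W$ — i.e., precisely when the \textsc{Subset Sum} instance is a yes-instance. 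The reduction is polynomial and uses only binary-encoded numbers together with a constant number of voters, which is exactly what is required for \emph{weak} hardness.

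The technically delicate point is not the reductions themselves, which are clean, but making precise the contrast with Proposition~\ref{proposition:polytimeone}: one must be explicit that weak (rather than strong) NP-hardness is the right claim, since the value dimension here is unbounded only because it piggybacks on the binary costs, and this slack is precisely what the pseudopolynomial dynamic program exploits. I would take care that the single-voter gadget keeps the non-numeric part of the instance trivially small, so that all the hardness is located entirely in the magnitudes of the costs.
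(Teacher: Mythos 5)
Your proposal is correct and takes essentially the same approach as the paper: weak NP-hardness via a reduction from \textsc{Subset Sum} in which total satisfaction collapses to $c(B)$, and a Knapsack-style dynamic program over money amounts (item prefix versus budget level up to $\ell$) for the pseudopolynomial upper bound. The only cosmetic difference is your gadget, which uses a single voter approving all items, where the paper creates one voter per item, each approving only its own item; both make total satisfaction equal $c(B)$, and your single-voter variant is in fact exactly the construction the paper itself uses later in the proof of Observation~\ref{observation:fptone}.
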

  
\begin{proof}
Weak NP-hardness follows by a straightforward reduction from the \textsc{Subset Sum} problem~\cite{GJ79}:
  Given a Subset Sum instance
  with integers $X = \{x_1, \ldots, x_n\}$,
  where the existence of a subset $X' \subseteq X$ with $\sum_{x \in X'} x = Z$ is to be decided,
  we construct the following budgeting scenario:
    For each integer $x_i$, we create a voter $v_i$ approving an item $a_i$ of cost $c(a_i) = x_i$,
  and set the limit to be $\ell = Z$.
  For a yes-instance of Subset Sum, a winning budget shall be of total cost $Z$, thus weak NP-hardness follows.
  
The reduction above also hints on the pseudopolynomial-time algorithm for $\mathcal{R}^m_{c(B_v)}$:
  Apply dynamic programming similar to that for Subset Sum, by iterating over the items and remembering the maximum total satisfaction that can be achieved for each amount of money.
\end{proof}

\subsection{Coping with Intractability}

We describe a simple Integer Linear Program (ILP) for Max rules over satisfaction functions which can be defined using ILPs;
note that all satisfaction functions considered here (i.e., $|B_v|$, $\mathbbm{1}_{|B_v| > 0}$, and $c(B_v)$)
can be defined using ILPs.
This is useful due to the availability of efficient ILP solvers.
Indeed, the simulations reported below were performed using the Gurobi ILP solver~\cite{gurobi} on such ILP formulations.

\begin{observation}\label{observation:ilp}
  Let $f$ be a satisfaction function which can be formulated as an ILP. Then, identifying a winning budget under $\mathcal{R}^m_f$ can be done using an ILP.
\end{observation}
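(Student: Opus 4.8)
The plan is to exhibit a single ILP whose optimal integral solutions correspond exactly to the winning budgets under $\mathcal{R}^m_f$. First I would introduce a binary decision variable $x_j \in \{0,1\}$ for each item $a_j \in A$, with the intended meaning that $x_j = 1$ iff $a_j$ is funded, so that a budget $B$ is recovered as $B = \{a_j : x_j = 1\}$. The single global constraint $\sum_{j=1}^m c(a_j)\, x_j \le \ell$ enforces feasibility, so that the feasible integral points of the program are in bijection with the feasible budgets of the scenario.

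Next I would encode the satisfaction of each voter. By hypothesis $f$ is ILP-formulable, meaning that for a fixed approval set $A_v$ there is a collection of auxiliary integer variables and linear constraints, together with a linear expression, that evaluates to $f(A_v, B)$ when the budget $B$ is described by indicator variables. I would instantiate this formulation once per voter $v \in V$, letting each copy read off the \emph{shared} item variables $x_1, \ldots, x_m$ while introducing its own private auxiliary variables and a satisfaction variable $s_v$ tied to $f(A_v, B)$. The objective is then to maximize $\sum_{v \in V} s_v$. Since every ingredient is linear and every variable is integral, the union of these pieces is again an ILP, and an optimal solution maximizes $\sum_{v \in V} f(A_v, B)$ over feasible budgets, as required.

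To make the construction concrete for the three functions studied here, I would note: for $f(A_v, B) = |B_v|$ one sets $s_v = \sum_{a_j \in A_v} x_j$; for $f(A_v, B) = c(B_v)$ one sets $s_v = \sum_{a_j \in A_v} c(a_j)\, x_j$; and for $f(A_v, B) = \mathbbm{1}_{|B_v| > 0}$ one adds $s_v \le 1$, $s_v \le \sum_{a_j \in A_v} x_j$, and $s_v \ge 0$. In each case $s_v$ is linear in the $x_j$ (and its private variables), confirming that all functions considered in the paper fall under the hypothesis.

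The step I expect to be the main obstacle is verifying the threshold-type encoding rather than the bookkeeping. For $\mathbbm{1}_{|B_v| > 0}$ the constraints only bound $s_v$ from above, so one must argue that, because each $s_v$ carries a positive coefficient in the maximization objective, every optimal solution pushes $s_v$ up to $\min(1, \sum_{a_j \in A_v} x_j) = \mathbbm{1}_{|B_v| > 0}$; this is where correctness of the linearization genuinely relies on the \emph{Max} aggregation. The same observation underlies any satisfaction function whose ILP formulation constrains its satisfaction variable only from above, since then the maximization itself drives the variable to the intended value.
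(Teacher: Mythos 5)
Your proposal is correct and follows essentially the same route as the paper's proof: binary indicator variables $x_j$ per item, the single budget constraint $\sum_j c(a_j)\,x_j \le \ell$, and an objective that maximizes the total satisfaction via the assumed ILP formulation of $f$. You go further than the paper by spelling out the per-voter instantiation with satisfaction variables $s_v$ and by verifying the subtle point that the upper-bound-only encoding of $\mathbbm{1}_{|B_v|>0}$ is driven to the correct value by the maximization---details the paper's two-line proof leaves implicit.
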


\begin{proof}
Introduce a binary variable $x_a$ for each $a \in A$, which is $1$ if and only if the item $a$ is to be budgeted. Add a budget constraint $\sum_{a \in A} c(a)\cdot x_a \leq \ell$ and set the objective to be $\max \sum_{v \in V} f(A_v, B)$.
\end{proof}

Next is a general, parameterized complexity result.

\begin{observation}\label{observation:fptone}
  For all Max rules, the problem of deciding whether a winning budget of total cost at least a given value is fixed-parameter tractable for the number $m$ of items,
  but there are some Max rules for which this problem is hard even when $n = 1$, where $n$ is the number of voters.
\end{observation}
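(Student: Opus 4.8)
The statement splits into two independent claims, which I would handle separately: the fixed-parameter tractability for the parameter $m$, and the existence of a Max rule that is hard already for a single voter.

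For the tractability part, the plan is brute force over all budgets, and the argument is uniform in the satisfaction function $f$, so it covers every Max rule at once. Since a budget is merely a subset $B \subseteq A$, there are only $2^m$ candidates. I would enumerate all of them; for each candidate $B$ I would test feasibility (whether $c(B) \leq \ell$) and, assuming $f$ is polynomial-time computable, evaluate the total satisfaction $\sum_{v \in V} f(A_v, B)$. A first sweep over the feasible budgets yields the maximum total satisfaction $s^\star$ attained by a winning budget; a second sweep then checks whether some feasible $B$ with satisfaction $s^\star$ has $c(B)$ at least the prescribed value. The whole procedure runs in time $O(2^m \cdot n \cdot \mathrm{poly}(m))$, i.e., $g(m)$ times a polynomial in the input size, which is exactly what FPT for the parameter $m$ requires.

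For the hardness claim it suffices to exhibit a single Max rule that is hard already for $n = 1$, and I would use $\mathcal{R}^m_{c(B_v)}$, reducing from \textsc{Subset Sum}. Given integers $x_1, \ldots, x_k$ and a target $Z$, I would create items $a_1, \ldots, a_k$ with costs $c(a_i) = x_i$, a single voter approving all of them, and budget limit $\ell = Z$. With one such voter we have $B_v = B$, so the total satisfaction of a budget $B$ equals $c(B)$; maximizing it subject to $c(B) \le \ell = Z$ therefore selects a subset whose cost is the largest value not exceeding $Z$. Every winning budget thus has the same cost, equal to this maximum, and this maximum equals $Z$ precisely when some subset of the $x_i$ sums to $Z$. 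Consequently, deciding whether a winning budget of total cost at least $Z$ exists is equivalent to the \textsc{Subset Sum} instance, establishing NP-hardness with $n = 1$.

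I expect the tractability part to be routine; the only point demanding care is the precise reading of the decision problem, since ``a winning budget of total cost at least a given value'' is a tie-breaking question among optimal budgets rather than a plain feasibility question, so both the enumeration (finding $s^\star$ first, then filtering by cost) and the reduction (observing that all winning budgets share the same cost under $c(B_v)$) must respect this. The genuinely substantive step is noticing that the cost-based satisfaction function collapses, for a single all-approving voter, to the objective of \textsc{Subset Sum}, which is what makes hardness survive down to $n = 1$ and thereby confirms that $m$, and not $n$, is the parameter that enables tractability.
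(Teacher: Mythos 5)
Your proposal is correct and follows essentially the same route as the paper: brute-force enumeration of all $2^m$ budgets for fixed-parameter tractability in $m$, and a reduction from \textsc{Subset Sum} to $\mathcal{R}^m_{c(B_v)}$ with a single voter approving all items for the hardness claim. Your treatment is in fact somewhat more careful than the paper's, since you explicitly address that the question concerns the cost of \emph{winning} (optimal) budgets rather than feasible ones, and you observe that under $c(B_v)$ with one all-approving voter every winning budget has the same cost, so satisfaction and cost coincide exactly as the reduction needs.
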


\begin{proof}
Fixed-parameter tractability with respect to the number $m$ of items follows by considering all feasible budgets.
Para-NP-hardness with respect to the number $n$ of voters holds, e.g., for the rule $\mathcal{R}^m_{c(B_v)}$,
by observing that it can encode instances of Subset Sum~\cite{GJ79} even with just one voter:
  Create one voter which approves all the items, where each item has cost equal to a number from the Subset Sum instance. 
Then, to decide whether there is a winning budget of total satisfaction at least the value asked for in the Subset Sum instance
  corresponds to deciding the Subset Sum instance.
\end{proof}

For $\mathcal{R}^m_{c(B_v)}$, which is weakly NP-hard (see Proposition~\ref{proposition:weakone})
even when there is only one voter (see the proof of Observation~\ref{observation:fptone}),
we have a pseudopolynomial time algorithm (see the proof of Proposition~\ref{proposition:weakone}) and an FPTAS,
as we show next.

\begin{observation}
  There is an FPTAS for $\mathcal{R}^m_{c(B_v)}$.
\end{observation}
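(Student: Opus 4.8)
The plan is to recognize that computing a winning budget under $\mathcal{R}^m_{c(B_v)}$ is, up to a direct reparametrization, an instance of \textsc{Knapsack}, and then to invoke the classical FPTAS for \textsc{Knapsack}. First I would rewrite the objective: for a budget $B$, swapping the order of summation gives
\[
  \sum_{v \in V} c(B_v) = \sum_{v \in V} \sum_{a \in A_v \cap B} c(a) = \sum_{a \in B} c(a) \cdot S(a),
\]
where $S(a)$ is the number of voters approving $a$ (as in the proof of Proposition~\ref{proposition:polytimeone}). Thus maximizing total satisfaction subject to $c(B) \leq \ell$ is exactly the $0/1$ \textsc{Knapsack} problem in which each item $a$ has weight $c(a)$, profit $p(a) := c(a) \cdot S(a)$, and the knapsack capacity is $\ell$.

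Next I would apply the textbook FPTAS for \textsc{Knapsack}~\cite{GJ79}: letting $p_{\max} := \max_a p(a)$, scale by $K := \varepsilon \cdot p_{\max} / m$, replace each $p(a)$ by $\lfloor p(a)/K \rfloor$, and run the profit-indexed dynamic program that, for each attainable (scaled) total profit, records the minimum total weight needed to attain it; among entries with total weight at most $\ell$ we return the one of largest profit. This runs in time polynomial in $m$ and $1/\varepsilon$, and the standard analysis shows the returned budget has total satisfaction at least $(1-\varepsilon)$ times the optimum, yielding an FPTAS for $\mathcal{R}^m_{c(B_v)}$.

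The one point that needs checking --- and the reason to use the profit-indexed dynamic program rather than the cost-indexed one of Proposition~\ref{proposition:weakone} --- is that both the item costs $c(a)$ and hence the profits $p(a) = c(a)\cdot S(a)$ are given in binary and may be superpolynomially large. The profit-indexed dynamic program is exactly the version that tolerates large weights, since it only ever compares total weights against the capacity $\ell$, and scaling the profits is precisely what brings the number of distinct profit values down to a polynomial in $m/\varepsilon$; so the large magnitudes pose no obstacle. Because the knapsack objective and the total-satisfaction objective are literally the same quantity, a $(1-\varepsilon)$-approximate knapsack solution is immediately a $(1-\varepsilon)$-approximate budget, and no further loss is incurred in the reduction.
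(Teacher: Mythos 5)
Your proposal is correct and takes essentially the same route as the paper: both reduce to \textsc{Knapsack} by giving each item weight $c(a)$ and profit $c(a)\cdot S(a)$ and then invoke the known FPTAS for \textsc{Knapsack}. You simply spell out the details the paper leaves implicit (the summation-swap identity showing the objectives coincide, and the profit-scaling dynamic program inside the FPTAS), which is fine but not a different argument.
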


\begin{proof}
Notice that given an instance of $\mathcal{R}^m_{c(B_v)}$,
one can reduce it to an instance of 
Knapsack:
  For each item, create a Knapsack element of \emph{weight} equal to the cost of the item and of \emph{value} equal to the cost of the item times the number of voters approving this item.  
Then, the result follows from the existence of an FPTAS for Knapsack~\cite{vazirani2013approximation}.
\end{proof}	

As for $\mathcal{R}^m_{\mathbbm{1}_{|B_v| > 0}}$,
which is NP-hard (see Observation~\ref{observation:nphardone}),
and for which there is no approximation algorithm with better than $1 - 1 / e$ approximation ratio (as it generalizes the multiwinner voting rule CC for approval elections, which itself is equivalent to the
Max Cover problem~\cite{fei:j:set-cover,sko-fal:j:max-cover}),
next we show that it is fixed-parameter tractable for the number $n$ of the voters.

\begin{proposition}
  $\mathcal{R}^m_{\mathbbm{1}_{|B_v| > 0}}$ is fixed-parameter tractable for $n$.
\end{proposition}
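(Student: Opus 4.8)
The plan is to exploit the fact that under $\mathbbm{1}_{|B_v| > 0}$ the total satisfaction of a budget $B$ depends only on which voters are \emph{covered} by $B$, i.e., on the set of voters $v$ with $A_v \cap B \neq \emptyset$. Since there are only $2^n$ possible such voter subsets, I would recast the optimization as a weighted covering problem whose universe is the voter set $V$, and solve it by a subset dynamic program over $V$; as $n$ is the parameter, a running time of the form $g(n)\cdot\mathrm{poly}(n,m)$ suffices.

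Concretely, I would first assign to each item $a$ its \emph{signature} $\sigma(a) := \{v \in V : a \in A_v\}$, the set of voters approving it. Then the set of voters covered by a budget $B$ is $\bigcup_{a \in B}\sigma(a)$, and the objective $\sum_{v \in V}\mathbbm{1}_{|B_v|>0}$ equals $|\bigcup_{a \in B}\sigma(a)|$. The task thus becomes: over all $B$ with $c(B) \le \ell$, maximize $|\bigcup_{a \in B}\sigma(a)|$.

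Next I would compute, for every subset $S \subseteq V$, the value $D(S)$ defined as the minimum cost of a budget whose covered set contains $S$. This is a weighted set cover instance with universe $V$ of size $n$, solvable by the standard subset recurrence: set $D(\emptyset) = 0$ and, processing subsets in order of increasing size, $D(S) = \min_{a\,:\,\sigma(a)\cap S \neq \emptyset}\bigl(c(a) + D(S\setminus\sigma(a))\bigr)$. Each of the $2^n$ entries is filled in time polynomial in $n$ and $m$, so the full table is built in $O(2^n\cdot\mathrm{poly}(n,m))$ time. Finally I would return the maximum cardinality $|S|$ over all $S$ with $D(S)\le\ell$, recovering a witnessing budget via the usual backpointers. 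Correctness is immediate in both directions: any budget $B$ with $c(B)\le\ell$ covers $S := \bigcup_{a\in B}\sigma(a)$ with $D(S)\le c(B)\le\ell$, and conversely any $S$ with $D(S)\le\ell$ yields a feasible budget covering $|S|$ voters.

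The only step needing care is justifying the covering recurrence: one must check that restricting to items $a$ with $\sigma(a)\cap S\neq\emptyset$ loses nothing, and that although the recurrence formally permits an item to recur in a sub-solution, such reuse never lowers the cost (the reused item covers nothing new once its voters are removed from the requirement), so the computed optimum is always realized by an honest budget, namely a \emph{set} of items of cost at most $\ell$. Granting this, the $O(2^n\cdot\mathrm{poly}(n,m))$ bound is of the form $g(n)\cdot\mathrm{poly}$, which establishes fixed-parameter tractability in $n$.
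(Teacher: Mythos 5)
Your proof is correct, but it takes a genuinely different route from the paper's. The paper's argument is a guess-and-verify brute force over voter partitions: guess a partition of $V$ (each class intended to be ``represented'' by a single item), guess for each class how many of its voters get satisfied, and then pick for each class the cheapest item achieving exactly that number; correctness follows because an optimal budget induces such a partition (assign every covered voter to one budgeted item approving her), and swapping each item for the cheapest one covering its class can only reduce cost. That enumeration runs in roughly $n^{O(n)}\cdot\mathrm{poly}(m)$ time. You instead set up a dynamic program over the $2^n$ subsets of voters, computing $D(S)$, the minimum cost of a budget covering at least $S$, via the weighted-set-cover recurrence, and then maximize $|S|$ subject to $D(S)\le\ell$. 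Both establish fixed-parameter tractability in $n$, but your approach buys a better running time, $O\bigl(2^n\cdot\mathrm{poly}(n,m)\bigr)$ versus the paper's $n^{O(n)}$-type enumeration, and your write-up is also more rigorous: the two-directional argument (any feasible budget certifies some $S$ with $D(S)\le\ell$, and the table value is always realized by an honest item \emph{set} despite the recurrence formally allowing reuse) is exactly the verification that the paper's sketch leaves implicit. What the paper's proof buys in exchange is brevity and a purely combinatorial statement needing no table or backpointers.
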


\begin{proof}
Guess the partition of the voters with the intended meaning that each group of voters in the partition is represented by the same item.
For each such group, guess the number of voters which would be satisfied.
Then, go over all items and pick as representative the cheapest item that makes exactly this number of voters in this group satisfied.
\end{proof}

\section{Budgeting Axioms}

In this section we suggest several axiomatic properties which are relevant to budgeting methods.
In particular, we focus on axioms which relate to the costs of the items.
For each axiom, after providing the definition we check which of the rules in our framework satisfy it.

Our first axiom models the very natural expectation that if within a
budgeting scenario we can afford to budget more items, then we
should. Formally, we express it as follows.

\begin{definition}[Budget Monotonicity]
  A budgeting method $\mathcal{R}$ satisfies Budget Monotonicity
if for each budgeting scenario $E = (A, V, c, \ell)$ and each pair of
feasible budgets $B$ and $B'$ such that $B \subset B'$ it holds that
if $B$ is winning then also $B'$ is winning.
\end{definition}

Perhaps surprisingly, the next example, admittedly somewhat artificial, demonstrates that not all rules in our framework satisfy Budget Monotonicity. 

\begin{example}
Let $f(A_v, B) = \min(\{c(a) : a \in B_v\})$;
that is, the satisfaction of a voter equals the cost of her cheapest approved item which is budgeted.
Then, $\mathcal{R}^m_f$ does not satisfy Budget monotonicity
(to see it, consider two items $a$, $b$ of cost $1$, $2$ respectively, one voter approving both $a$ and $b$, and a pair of budgets $B = \{a\}$ and $B' = \{a, b\}$).
\end{example}

Nevertheless, notice that all three satisfaction functions we consider
here (namely $f = |B_v|$, $f = \mathbbm{1}_{|B_v| > 0}$, and
$f = c(B_v)$) are super-set monotone; that is, for each of them, we
have that $f(B') \geq f(B)$ for each $B \subset B'$.  Thus, since one
can verify that Max rules are Budget Monotone for super-set monotone
satisfaction functions, we have the following.

\begin{corollary}\label{corollary:bmone}
  $\mathcal{R}^m_{|B_v|}$,
  $\mathcal{R}^m_{c(B_v)}$,
  and 
  $\mathcal{R}^m_{\mathbbm{1}_{|B_v| > 0}}$,
  satisfy Budget Monotonicity.
\end{corollary}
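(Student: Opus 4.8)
The plan is to reduce the corollary to the single general fact flagged just before the statement: for any super-set monotone satisfaction function $f$, the Max rule $\mathcal{R}^m_f$ is Budget Monotone. Since the three functions $|B_v|$, $c(B_v)$, and $\mathbbm{1}_{|B_v| > 0}$ are each super-set monotone, the corollary then follows immediately by instantiating this fact three times. So the work splits into (i) proving the general implication and (ii) verifying super-set monotonicity of each of the three functions.

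For the general implication, I would fix a budgeting scenario $E = (A, V, c, \ell)$ and let $B \subset B'$ be two feasible budgets with $B$ winning under $\mathcal{R}^m_f$. By the definition of a Max rule, $B$ attains the maximum value $M := \max_{C \subseteq A,\, c(C) \le \ell} \sum_{v \in V} f(A_v, C)$ of total satisfaction over all feasible budgets, so $\sum_{v \in V} f(A_v, B) = M$. Because $f$ is super-set monotone for every voter, i.e.\ $f(A_v, B') \ge f(A_v, B)$ whenever $B \subseteq B'$, summing over $V$ gives $\sum_{v \in V} f(A_v, B') \ge \sum_{v \in V} f(A_v, B) = M$. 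On the other hand, $B'$ is itself a feasible budget, so its total satisfaction cannot exceed the maximum, i.e.\ $\sum_{v \in V} f(A_v, B') \le M$. Combining the two inequalities yields $\sum_{v \in V} f(A_v, B') = M$, which means $B'$ also maximizes total satisfaction and is therefore winning, as required.

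The verification in (ii) is routine and uses only that $B \subseteq B'$ forces $A_v \cap B \subseteq A_v \cap B'$ for every voter $v$: for $f = |B_v|$ monotonicity is monotonicity of cardinality under inclusion; for $f = c(B_v)$ it follows since costs are non-negative, so enlarging $A_v \cap B$ cannot decrease its total cost; and for $f = \mathbbm{1}_{|B_v| > 0}$ the indicator can only move from $0$ to $1$ as $B_v$ grows.

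I do not expect a genuine obstacle here; the only point that needs care is the reading of ``winning'' in the definition of Budget Monotonicity. Since the statement compares two distinct winning budgets, ``winning'' must be interpreted as ``belonging to the set of total-satisfaction maximizers'' rather than as the single output of some fixed tie-breaking rule; under this (natural) reading the argument above is tie-robust, as it only compares total satisfaction values and never depends on how ties among maximizers are resolved.
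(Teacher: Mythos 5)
Your proposal is correct and follows exactly the paper's route: the paper also reduces the corollary to the observation that the three satisfaction functions are super-set monotone together with the general fact (which it leaves as ``one can verify'') that Max rules are Budget Monotone for super-set monotone satisfaction functions. Your write-up simply fills in the details of that general fact, via the same maximizer-sandwich argument the paper intends, so there is nothing to flag.
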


For Greedy rules and Proportional greedy rules, it is never the case
that two feasible budgets $B$, $B'$ with $B \subset B'$ are both
winning, so we have the following.

\begin{corollary}\label{corollary:bmtwo}
  $\mathcal{R}^g_{|B_v|}$,
  $\mathcal{R}^g_{c(B_v)}$,
  $\mathcal{R}^g_{\mathbbm{1}_{|B_v| > 0}}$,
  $\mathcal{R}^p_{|B_v|}$,
  $\mathcal{R}^p_{c(B_v)}$,
  and 
  $\mathcal{R}^p_{\mathbbm{1}_{|B_v| > 0}}$,
  satisfy Budget Monotonicity.
\end{corollary}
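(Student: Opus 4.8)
The plan is to exploit the iterative structure of the Greedy and Proportional greedy rules to show that any budget they output is \emph{budget-maximal}, and then observe that this makes Budget Monotonicity hold vacuously for all six rules at once. First I would make precise the (intended) termination condition shared by both rule families: in each iteration the rule considers only those not-yet-selected items $a$ whose addition keeps the running budget feasible, i.e.\ $c(B \cup \{a\}) \leq \ell$, and it halts exactly when no such item remains. Under this reading, the output $W = \mathcal{R}(E)$ of either rule satisfies, for every item $a \notin W$, the inequality $c(W) + c(a) > \ell$; otherwise $a$ would have been an admissible candidate in the final iteration and the process would not have stopped.

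Next I would turn budget-maximality into the axiom via a one-line contradiction. Suppose $W$ is a winning budget and $B'$ is a feasible budget with $W \subset B'$. Since the inclusion is strict, pick some $a \in B' \setminus W$; because costs are nonnegative and $W \subseteq B'$, we get $c(B') \geq c(W) + c(a) > \ell$, contradicting the feasibility of $B'$. Hence a winning budget has no feasible proper superset, so there is simply no pair of feasible budgets $B \subset B'$ with $B$ winning. The implication defining Budget Monotonicity --- ``if $B$ is winning then $B'$ is winning'' --- therefore holds vacuously for $\mathcal{R}^g_{|B_v|}$, $\mathcal{R}^g_{c(B_v)}$, $\mathcal{R}^g_{\mathbbm{1}_{|B_v| > 0}}$, $\mathcal{R}^p_{|B_v|}$, $\mathcal{R}^p_{c(B_v)}$, and $\mathcal{R}^p_{\mathbbm{1}_{|B_v| > 0}}$. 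This also recovers the remark preceding the statement: two nested feasible budgets can never both be budget-maximal, hence never both be winning.

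The only delicate point, and the step I would be most careful about, is justifying the termination condition, since the definitions of $\mathcal{R}^g_f$ and $\mathcal{R}^p_f$ do not spell it out verbatim. I would argue that restricting each greedy step to feasible additions is the only coherent reading --- otherwise the rule could return an infeasible set, contradicting the very definition of a budgeting method --- and that running until no admissible item remains is exactly what the paper's own observation (that no two nested feasible budgets are simultaneously winning) presupposes. With that interpretation fixed, the argument is uniform across the three satisfaction functions, since it uses only nonnegativity of costs and never inspects $f$ itself, so no case analysis over $f$ is required.
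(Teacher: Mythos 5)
Your proof is correct and takes essentially the same route as the paper: the paper's one-line justification---that for Greedy and Proportional greedy rules it is never the case that two feasible budgets $B \subset B'$ are both winning---is exactly the vacuity-via-budget-maximality argument you spell out (the greedy output admits no feasible proper superset, so the axiom's antecedent never fires). Your only addition is making explicit the termination convention (each iteration restricted to feasible additions, halting when none remain), which the paper leaves implicit but clearly presupposes.
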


In the next axiom we consider the response of our rules to increasing
the available limit. Specifically, we require that if we increase the
limit, then all budgeted items remain budgeted, provided that no new
item becomes affordable (this last condidtion is quite natural; if
there is an item that all the voters approve, which is above
the budget limit before its extension but is within the limit after
the extension, then it is quite natural that this item might be budgeted
and might remove many previously budgeted ones). This axiom is
analogous to the committee monotonicity axiom from the world of
multiwinner elections~\cite{elkind2017properties}.

\begin{definition}[Limit Monotonicity]
%
  We say that a budgeting method $\mathcal{R}$ satisfies Limit Monotonicity if for each pair of budgeting scenarios $E = (A, V, c, \ell)$, $E' = (A, V, c, \ell + 1)$ with no item which costs exactly $\ell + 1$,
for each $a \in A$, it holds that $a \in \mathcal{R}(E) \implies a \in \mathcal{R}(E')$.
\end{definition}

\begin{proposition}\label{proposition:limitmonotonicityone}
  Neither of
  $\mathcal{R}^m_{|B_v|}$,
  $\mathcal{R}^m_{c(B_v)}$,
  and 
  $\mathcal{R}^m_{\mathbbm{1}_{|B_v| > 0}}$
  satisfies Limit Monotonicity.
\end{proposition}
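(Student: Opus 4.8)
The plan is to exhibit a single budgeting scenario together with a single item that is budgeted at limit $\ell$ but dropped at limit $\ell+1$, and to verify that this one example refutes Limit Monotonicity for all three rules simultaneously. The guiding intuition is the familiar non-monotonicity of Knapsack-type optimization in the capacity parameter: adding one unit of budget can make affordable a \emph{combination} of cheap items whose total satisfaction strictly exceeds anything achievable with a single expensive item, even though no individual item newly enters the feasible range. The side condition that no item costs exactly $\ell+1$ is harmless for this idea, since the swap is driven by a pair of items rather than by a single newly affordable one.

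Concretely, I would take $\ell = 3$, items $a, b, c$ with costs $c(a) = 3$ and $c(b) = c(c) = 2$, and pairwise disjoint approval sets, with $a$ approved by $5$ voters and each of $b, c$ approved by $4$ voters. At limit $\ell = 3$ the only affordable nonempty budgets are $\{a\}$, $\{b\}$, and $\{c\}$ (since $\{b,c\}$ already costs $4$), so $a$ is the unique winner under each rule. At limit $\ell+1 = 4$ the budget $\{b,c\}$ becomes affordable while $a$ still cannot be combined with anything (as $c(a) + 2 = 5 > 4$), and $\{b,c\}$ strictly dominates $\{a\}$ under all three objectives. Since $\{a\}$ is the only feasible budget containing $a$, the item $a$ then belongs to no winning budget, witnessing the violation; note also that no item costs $4 = \ell+1$, so the scenario is admissible for the axiom.

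The one genuine obstacle is choosing the approval counts so that the swap happens strictly and simultaneously for all three rules, so that $a$ is the \emph{unique} winner at $\ell$ (hence $a \in \mathcal{R}(E)$) yet lies in \emph{no} winning budget at $\ell+1$ (hence $a \notin \mathcal{R}(E')$); a mere tie would not suffice. For $f = |B_v|$ and $f = \mathbbm{1}_{|B_v| > 0}$ (the latter rendered additive by the disjointness of the approval sets) the values of $\{a\}$ and $\{b,c\}$ are $5$ versus $8$, while for $f = c(B_v)$ they are $15$ versus $16$. Writing $\alpha = S(a)$ and $\beta = S(b) = S(c)$, the requirement that $a$ win at $\ell$ gives $\alpha > \beta$, and the requirement that $\{b,c\}$ win at $\ell+1$ gives $2\beta > \alpha$ for the counting and coverage objectives and $4\beta > 3\alpha$ for the cost-weighted objective; together these collapse to $\beta < \alpha < \tfrac{4}{3}\beta$, which $\beta = 4,\ \alpha = 5$ satisfies. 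Checking these inequalities is the only computation, and it is routine once the counts are fixed.
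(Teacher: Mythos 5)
Your proposal is correct, and every computation checks out: with costs $c(a)=3$, $c(b)=c(c)=2$, disjoint approval sets of sizes $5,4,4$, and limits $3$ versus $4$, the budget $\{a\}$ is the strict unique optimum at limit $3$ under all three objectives (values $5$, $5$, $15$ against $4$, $4$, $8$ for each singleton competitor), while $\{b,c\}$ is the strict unique optimum at limit $4$ (values $8$, $8$, $16$ against $5$, $5$, $15$), and no item costs $\ell+1=4$, so the axiom's side condition holds. The paper proves the same proposition by the same underlying phenomenon---a knapsack-style swap of the optimal set when the capacity grows by one---but it uses three \emph{separate} counterexamples, one tailored to each rule: unit-cost items with overlapping approvals for $\mathcal{R}^m_{\mathbbm{1}_{|B_v|>0}}$, and a cost profile $(2,3,3,5)$ with limits $6$ and $7$ for both $\mathcal{R}^m_{c(B_v)}$ and $\mathcal{R}^m_{|B_v|}$. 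Your single unified example buys two things the paper's proof does not have: first, economy (one scenario refutes all three rules at once, with the inequality analysis $\beta < \alpha < \tfrac{4}{3}\beta$ making transparent exactly what tuning is needed); second, robustness to tie-breaking, since the paper's counterexample for the coverage rule has all three singletons covering two voters each at limit $1$, so it only shows that $\{b\}$ ``might be'' winning, whereas your winners are strictly unique at both limits---a cleaner refutation given that the paper explicitly sets tie-breaking aside. The disjointness trick, which makes the coverage objective additive and hence alignable with the other two, is the genuine idea enabling the unification.
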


\begin{proof}
Below we consider each of the three rules separately.

\mypara{$\boldsymbol{\mathcal{R}^m_{\mathbbm{1}_{|B_v| > 0}}}$:}
Consider a budgeting scenario $E$ with items $a$, $b$, and $c$,
all of unit cost, and four voters:
$v_1 : \{a\}$;
$v_2 : \{a, b\}$;
$v_3 : \{b, c\}$;
$v_4 : \{c\}$.
Then, with budget limit $1$ a winning budget might be $\{b\}$,
while with budget limit $2$ the only winning budget is $\{a, c\}$.

\mypara{$\boldsymbol{\mathcal{R}^m_{c(B_v)}}$:}
Consider a budgeting scenario $E$ with items $a$, $b$, $c$, and $d$,
with costs $2$, $3$, $3$, and $5$, respectively, 
and one voter:
$v_1 : \{a, b, c, d\}$.
Then, with budget limit $6$ the only winning budget is $\{b, c\}$,
while with budget limit $7$ the only winning budget is $\{a, d\}$.

\mypara{$\boldsymbol{\mathcal{R}^m_{|B_v|}}$:}
Consider a budgeting scenario $E$ with the same items as above
and five voters:
$v_1 : \{a, b, c, d\}$;
$v_2 : \{a, b, c, d\}$;
$v_3 : \{b, c, d\}$;
$v_4 : \{d\}$;
$v_5 : \{d\}$.
Then, with budget limit $6$ the only winning budget is $\{b, c\}$,
while with budget limit $7$ the only winning budget is $\{a, d\}$.
\end{proof}

Greedy and Proportional greedy rules also fail Limit Monotonicity.
  Consider a budgeting scenario with items $a$, $b$, and $c$,
  where $a$ has the largest value according to the relevant satisfaction function;
  thus, $a$ is selected in the first iteration.
  Set the cost of $a$ so that, after selecting it, the remaining budget limit is such that,
  for the original budget limit $\ell$, only $b$ can be selected,
  while for the budget limit $\ell + 1$, also $c$ can be selected.
  Set the value of $c$ to be higher than that of $b$.
  Thus, $b$ is selected in the first case and $c$ in the second case.

\begin{corollary}\label{cor:lmtwo}
  $\mathcal{R}^g_{|B_v|}$,
  $\mathcal{R}^g_{c(B_v)}$,
  $\mathcal{R}^g_{\mathbbm{1}_{|B_v| > 0}}$,
  $\mathcal{R}^p_{|B_v|}$,
  $\mathcal{R}^p_{c(B_v)}$,
  and 
  $\mathcal{R}^p_{\mathbbm{1}_{|B_v| > 0}}$,
  do not satisfy Limit Monotonicity.
\end{corollary}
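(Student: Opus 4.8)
The plan is to refute the axiom for all six rules at once by exhibiting a single explicit counterexample that realizes the construction sketched just above the statement. I would take three items $a$, $b$, $c$ and arrange the costs so that, after the (common) first pick of $a$, exactly one more item is affordable under the limit $\ell$ while a different, more attractive, item becomes affordable under $\ell + 1$. Concretely, I would set $c(a) = 10$, $c(b) = 5$, $c(c) = 6$, and compare the scenario $E$ with $\ell = 15$ against $E'$ with $\ell + 1 = 16$. Since $c(a) + c(b) = 15$ and $c(a) + c(c) = 16$, once $a$ is chosen the remaining budget admits only $b$ in $E$ but admits $c$ (and exactly fills the budget) in $E'$; note also that no single item costs $16$, so the side condition of Limit Monotonicity is satisfied and the axiom genuinely applies.

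The heart of the proof is choosing the approval profile so that every one of the six rules both selects $a$ in the first iteration and prefers $c$ over $b$ in the second iteration whenever both are affordable. I would use $11$ voters: seven approve $a$, of whom three also approve $b$, and four further voters approve only $c$ (so $a$, $b$, $c$ receive $7$, $3$, $4$ approvals respectively, with $b$'s approvers nested inside $a$'s and $c$'s approvers disjoint from $a$'s). Then in the first iteration $a$ has strictly the largest total satisfaction under each of $|B_v|$, $\mathbbm{1}_{|B_v| > 0}$, and $c(B_v)$, and also the strictly largest satisfaction-per-cost, so it is picked first by both the Greedy and the Proportional greedy variants. In the second iteration, with $B = \{a\}$ fixed, adding $c$ beats adding $b$ for all three satisfaction functions under both normalizations: $b$'s approvers are already covered by $a$, so $b$ contributes no fresh coverage under $\mathbbm{1}_{|B_v| > 0}$ and only a small marginal gain under the others, whereas $c$ brings in four new voters. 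Hence $\mathcal{R}(E) = \{a, b\}$ while $\mathcal{R}(E') = \{a, c\}$, so $b$ is funded under $E$ but not under $E'$, violating Limit Monotonicity for all six rules at once.

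The main obstacle I anticipate is making one profile work uniformly across all six rules, since they optimize genuinely different quantities. In particular, the Proportional greedy rules divide the marginal satisfaction by cost, so $a$ being the most approved item is not by itself enough: its approval count must dominate relative to its (larger) cost, which forces the counts to be chosen carefully — here the ratio $7 : 3 : 4$ with the stated costs sits just inside all the required inequalities (e.g. $3 n_a > 5 n_c$ reads $21 > 20$). Likewise, because $\mathbbm{1}_{|B_v| > 0}$ depends on the union of covered voters rather than on raw counts, I would deliberately nest $b$'s approvers inside $a$'s and keep $c$'s approvers disjoint from $a$'s, so that $c$ wins the coverage comparison in the second iteration. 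Finally, I would keep every relevant comparison strict so that the conclusion is independent of tie-breaking; with the numbers fixed, verifying the six first-iteration and six second-iteration inequalities is then a routine finite check.
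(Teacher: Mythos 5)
Your proposal is correct and takes essentially the same route as the paper: the paper's own argument is exactly the three-item sketch you instantiate (a common first pick $a$, with the extra unit of budget swapping the affordable-and-preferred second pick from $b$ to $c$). Your concrete numbers ($c(a)=10$, $c(b)=5$, $c(c)=6$, $\ell=15$, approval counts $7{:}3{:}4$ with $b$'s approvers nested in $a$'s and $c$'s disjoint) do check out with strict inequalities for all six rules simultaneously, so you have in fact supplied the explicit verification that the paper leaves implicit.
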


A budgeting method~$\mathcal{R}$ satisfies Discount Monotonicity if
any budgeted item remains budgeted when its price decreases.  This is
a very desirable property as failing it means that people proposing
new items for the participatory budget have to think strategically
about the item's price, instead of trying to minimize it.

\begin{definition}[Discount Monotonicity]
  A budgeting method $\mathcal{R}$ satisfies Discount Monotonicity if
  for each budgeting scenario $E = (A, V, c, \ell)$ and for each
  $b \in \mathcal{R}(E)$, it holds that $b \in \mathcal{R}(E')$ for
  $E' = (A, V, c', \ell)$, where for each item $a \in A$, we have that
  $c'(a) = c(a)$ whenever $a \neq b$, and $c'(b) = c(b) - 1$.
\end{definition}

\begin{proposition}\label{proposition:discountmonotonicityone}
  $\mathcal{R}^m_{|B_v|}$,
  $\mathcal{R}^m_{\mathbbm{1}_{|B_v| > 0}}$,
  $\mathcal{R}^g_{|B_v|}$,
  $\mathcal{R}^g_{\mathbbm{1}_{|B_v| > 0}}$,
  $\mathcal{R}^p_{|B_v|}$,
  and
  $\mathcal{R}^p_{\mathbbm{1}_{|B_v| > 0}}$,
  satisfy Discount Monotonicity,
  while
  $\mathcal{R}^m_{c(B_v)}$,
  $\mathcal{R}^g_{c(B_v)}$,
  and
  $\mathcal{R}^p_{c(B_v)}$
  fail it.
\end{proposition}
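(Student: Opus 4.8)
The plan is to split the nine rules into the six that use the cost-independent satisfaction functions $|B_v|$ and $\mathbbm{1}_{|B_v|>0}$, for which I would prove Discount Monotonicity, and the three that use $c(B_v)$, for which I would argue failure. The whole positive side rests on a single observation: neither $|B_v|$ nor $\mathbbm{1}_{|B_v|>0}$ reads the cost function, so replacing $c$ by $c'$ (which only lowers $c(b)$ by one) leaves every satisfaction value $\sum_{v} f(A_v,B)$ unchanged and can only enlarge the set of feasible budgets, and any newly feasible budget must contain $b$, since it is the sole item whose cost dropped. For the Max rules $\mathcal{R}^m_{|B_v|}$ and $\mathcal{R}^m_{\mathbbm{1}_{|B_v|>0}}$ I would let $B^\ast \ni b$ be an optimal budget in $E$ with value $\mathrm{OPT}$; since $B^\ast$ stays feasible, the optimum in $E'$ is at least $\mathrm{OPT}$; if it equals $\mathrm{OPT}$ then $B^\ast$ is still optimal and witnesses $b\in\mathcal{R}(E')$, while if it strictly exceeds $\mathrm{OPT}$ then every optimal budget of $E'$ is newly feasible and hence contains $b$. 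Either way $b$ remains budgeted.

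For the Greedy and Proportional greedy rules over $|B_v|$ and $\mathbbm{1}_{|B_v|>0}$ I would run the process on $E$ and on $E'$ in lockstep. The key point is that the greedy priority of an item (its marginal satisfaction for $\mathcal{R}^g$, or that marginal divided by cost for $\mathcal{R}^p$) is, for these two functions, independent of every other item's cost; lowering $c(b)$ leaves all priorities of items $a\neq b$ untouched, does not decrease the priority of $b$ (for $\mathcal{R}^p$ it strictly increases, as the numerator is cost-free while the denominator shrinks), and only relaxes the affordability test for $b$. I would then argue by induction on the rounds that, as long as $b$ has not yet been chosen, the two runs select the same item: at a common partial budget the affordable sets agree except that $b$ may be affordable in $E'$ when it is not in $E$, and $b$'s priority is at least as large in $E'$, so whenever $E$ picks some $a\neq b$ so does $E'$, and whenever $E$ picks $b$ (which it must, since $b\in\mathcal{R}(E)$) the same or a stronger comparison forces $E'$ to pick $b$ as well. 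Hence $b$ is selected in $E'$.

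For the failing side I would exhibit one scenario that defeats both $\mathcal{R}^m_{c(B_v)}$ and $\mathcal{R}^g_{c(B_v)}$: two items with $c(a)=5$ and $c(b)=3$, voters $v_1:\{a,b\}$ and $v_2:\{b\}$ (so $b$ has two approvers and $a$ has one), and limit $\ell=5$. Because $c(B_v)$ weights each item by its cost, the value of a singleton is $c(b)\cdot 2=6$ for $b$ and $c(a)\cdot 1=5$ for $a$, while $\{a,b\}$ costs $8>\ell$; thus $b$ is budgeted in $E$ both as the Max optimum and as the first greedy pick. Discounting $b$ to cost $2$ drops its value to $4<5$, the pair still does not fit ($7>\ell$), and now $a$ is the unique Max optimum and the first greedy pick, so $b$ is ejected. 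The discount hurts $b$ precisely because its satisfaction contribution $c(b)\cdot n_b$ decreases with its price.

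The step I expect to be the real obstacle is $\mathcal{R}^p_{c(B_v)}$. Here the proportional priority of an item is $\bigl(c(a)\cdot n_a\bigr)/c(a)=n_a$, i.e.\ simply its number of approvers, which does not depend on any cost at all; discounting $b$ therefore cannot lower its priority and can only ease its affordability, so the mechanism that breaks the Max and Greedy variants is unavailable, and in fact the lockstep argument of the second paragraph seems to apply to it verbatim. Establishing failure thus cannot rely on $b$'s own priority dropping; it would have to come from a cross-round interaction in which the discount changes which \emph{other} items are affordable in a way that ultimately displaces $b$. I would scrutinize this case most carefully, searching for such an interaction and being prepared to revisit the claim should none materialize.
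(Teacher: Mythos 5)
Your six positive cases and two of your three negative cases are correct and essentially follow the paper's route. For the Max rules over $|B_v|$ and $\mathbbm{1}_{|B_v|>0}$ the paper gives the same cost-independence argument, phrased counterpositively but less carefully than your two-case analysis of the optimum; for the greedy and proportional greedy rules it compresses your lockstep induction into a single sentence (``consider the iteration in which $b$ is selected for $E$, and observe that it would be selected also for $E'$, as its relative value does not decrease''), so your write-up supplies exactly what the paper leaves implicit. For $\mathcal{R}^m_{c(B_v)}$ and $\mathcal{R}^g_{c(B_v)}$ your counterexample differs from the paper's and is better: the paper uses two items of cost $2$, a single voter approving both, and limit $2$, so that $\{b\}$ wins in $E$ only through a tie, whereas your cost-$5$/cost-$3$ instance makes $b$ the unique winner before the discount and $a$ the unique winner after it, with no reliance on tie-breaking (which the paper elsewhere declares it ignores).

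Your refusal to claim the ninth case is the right call: the assertion that $\mathcal{R}^p_{c(B_v)}$ fails Discount Monotonicity is an error in the paper, not a gap in your proof. As you observe, the proportional priority of any item under $c(B_v)$ is $c(a)\cdot n_a/c(a)=n_a$, its number of approvers; hence $\mathcal{R}^p_{c(B_v)}$ executes literally the same iterative process as $\mathcal{R}^g_{|B_v|}$ (repeatedly add an affordable item with the most approvals), so the two rules coincide on every budgeting scenario and must satisfy exactly the same axioms---yet the proposition certifies $\mathcal{R}^g_{|B_v|}$ and condemns $\mathcal{R}^p_{c(B_v)}$. Concretely, the paper handles $\mathcal{R}^p_{c(B_v)}$ with the same equal-cost counterexample as above and claims that after the discount ``the only winning budget is $\{a\}$''; that is false for the proportional rule, since after the discount $a$ has priority $2/2=1$ and $b$ has priority $1/1=1$, so $\{b\}$ remains a possible (tied) outcome. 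The claimed failure can only be produced by breaking the tie toward $b$ in $E$ and toward $a$ in $E'$, i.e., by adversarial tie-breaking, which would equally ``refute'' the six rules the paper lists as discount monotone. Your lockstep argument does apply verbatim to $\mathcal{R}^p_{c(B_v)}$, and the correct resolution of your final paragraph is that this rule satisfies Discount Monotonicity, contradicting the statement as written.
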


\begin{proof}
Intuitively,
for $f = |B_v|$ and $f = \mathbbm{1}_{|B_v| > 0}$, decreasing the cost only increases the attractiveness of the item,
while for $f = c(B_v)$, decreasing the cost makes the item less attractive.

More formally,
let $E$ be the original election and $E'$ be the modified election,
where in $E'$ the cost of item $b$ decreases.
Then,
for $\mathcal{R}^m_{\mathbbm{1}_{|B_v| > 0}}$
and for $\mathcal{R}^m_{|B_v|}$,
assume,
counterpositively,
that there is some $B'$ which wins in $E'$ and $b \notin B'$,
but there is some $B$ which wins in $E$ and $b \in B$.
Consider the total satisfaction $TS(B)$ which $B$ achieves and the total satisfaction $TS(B')$ which $B'$ achieves;
notice first that the total satisfaction does not depend on the election (that is, the total satisfaction of some set of items achieves for $E$ is the same as it achieves for $E'$).
Notice further that $B'$ is feasible for $E$ and that, in $E'$, the total satisfaction of $B'$ is the maximum that can be achieved,
thus we have that $TS(B) \geq TS(B')$.
Now, since $B$ is feasible also in $E'$,
we conclude that $B$ shall be winning in $E'$.

For $\mathcal{R}^g_{\mathbbm{1}_{|B_v| > 0}}$,
$\mathcal{R}^g_{|B_v|}$,
$\mathcal{R}^p_{\mathbbm{1}_{|B_v| > 0}}$,
and for $\mathcal{R}^p_{|B_v|}$,
consider the iteration in which $b$ is selected for the budgeting scenario $E$,
and observe that it would be selected also for the budgeting scenario $E'$,
as its relative value does not decrease.
For $\mathcal{R}^m_{c(B_v)}$,
$\mathcal{R}^g_{c(B_v)}$,
and for $\mathcal{R}^p_{c(B_v)}$,
consider a budgeting scenario with
two items $a$, $b$,
with $c(a) = 2$ and $c(b) = 2$,
and one voter:
$v_1 : \{a, b\}$.
Let the budget limit be $2$.
Then, in this original budgeting scenario, the budget $\{b\}$ is winning,
while if the cost of $b$ decreases by one, then the only winning budget is $\{a\}$.
\end{proof}

The next two axioms regard the situation of a person proposing a new
item, provided that this new item has some internal structure and can
be presented either as a single one or as several items (e.g.,
renovation of a school can either be a single project, or several
ones, including painting the interior, painting the exterior, buying
new furniture etc.). We consider splitting and merging items.


\begin{definition}[Splitting Monotonicity]
  A budgeting method $\mathcal{R}$ satisfies Splitting Monotonicity if
  for each budgeting scenario $E = (A, V, c, \ell)$, for each
  $a \in \mathcal{R}(E)$, and for each budgeting scenario $E'$ which
  is formed by splitting $a$ into a set of items $A'$ with
  $c(a) = c(A')$, and such that the voters which approve $a$ in $E$
  approve all items of $A'$ in $E'$ and no other voters approve items
  of $A'$, it holds that $\mathcal{R}(E') \cap A' \neq \emptyset$.
  Similarly, a budgeting method $\mathcal{R}$ satisfies Strong
  Splitting Monotonicity if
  it holds
  that $A' \subseteq \mathcal{R}_{E'}$.
\end{definition}

\begin{proposition}\label{proposition:splittingmonotonicityone}
  $\mathcal{R}^m_{|B_v|}$,
  $\mathcal{R}^m_{\mathbbm{1}_{|B_v| > 0}}$,
  $\mathcal{R}^g_{|B_v|}$,
  $\mathcal{R}^g_{\mathbbm{1}_{|B_v| > 0}}$,
  $\mathcal{R}^p_{|B_v|}$,
  $\mathcal{R}^p_{\mathbbm{1}_{|B_v| > 0}}$,
  $\mathcal{R}^m_{c(B_v)}$,
  and $\mathcal{R}^p_{c(B_v)}$,
  satisfy Splitting Monotonicity,
  while
  $\mathcal{R}^g_{c(B_v)}$ does not satisfy Splitting Monotonicity.
\end{proposition}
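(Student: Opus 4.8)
The plan is to handle the three families of rules separately, exploiting one structural fact throughout. Write $n(x)$ for the number of voters approving an item $x$. When $a$ is split into $A'$ with $c(A') = c(a)$ and exactly the former approvers of $a$ approving every item of $A'$, each new item $a' \in A'$ satisfies $n(a') = n(a)$ while $c(a') \leq c(a)$: the pieces are exactly as ``popular'' as $a$ but (weakly) cheaper. All eight positive cases will follow from comparing the score a piece offers against the score $a$ offered, and the single negative case is exactly the one where this comparison goes the wrong way.

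For the Max rules, let $B = \mathcal{R}(E) \ni a$ and set $B'' = (B \setminus \{a\}) \cup A'$, which is feasible in $E'$ because $c(A') = c(a)$. A one-line check for each $f \in \{|B_v|, \mathbbm{1}_{|B_v|>0}, c(B_v)\}$ shows the total satisfaction of $B''$ in $E'$ is at least that of $B$ in $E$ (equal for $\mathbbm{1}_{|B_v|>0}$ and $c(B_v)$; for $|B_v|$ each approver now counts $|A'|$ funded items instead of one), so $\mathrm{OPT}(E') \geq \mathrm{OPT}(E)$. Conversely, every feasible budget $D$ in $E'$ with $D \cap A' = \emptyset$ lives in $A \setminus \{a\}$, is feasible in $E$, and has identical total satisfaction in both scenarios. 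I would then distinguish two cases: if $\mathrm{OPT}(E') > \mathrm{OPT}(E)$ then no $A'$-free budget can be optimal, so every winning budget meets $A'$; if $\mathrm{OPT}(E') = \mathrm{OPT}(E)$ then $B''$ itself is a winning budget meeting $A'$.

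For the Greedy and Proportional greedy rules I would couple the two runs step by step while the $E'$-run has not yet taken an item of $A'$; during this phase the partial budgets coincide and the marginals of the shared items agree. For Greedy under $|B_v|$ and $\mathbbm{1}_{|B_v|>0}$ the marginal of each piece $a'$ equals that of $a$ (namely $n(a)$, resp. the number of still-uncovered approvers of $a$), so at the moment the $E$-run selects $a$ the pieces tie for the maximum marginal and are affordable (being cheaper than $a$), and one is selected. For Proportional greedy the criterion is marginal-per-cost, and the pieces' density is $n(a)/c(a') \geq n(a)/c(a)$ for $|B_v|$, the analogous ratio for $\mathbbm{1}_{|B_v|>0}$, and exactly $n(a)$ for $c(B_v)$ (the factor $c(a')$ cancels); in every case it is at least $a$'s density, so a piece is again selected. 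By contrast $\mathcal{R}^g_{c(B_v)}$ fails, because its raw marginal $c(a')n(a)$ strictly drops when $a$ is split: a witnessing scenario is a single voter approving two items $a$ and $y$ with $c(a)=4$ and $c(y)=3$ and limit $4$, where Greedy first takes $a$ (marginal $c(a)n(a)=4 > 3 = c(y)n(y)$), so $a \in \mathcal{R}(E)$, but after splitting $a$ into $a_1,a_2$ of cost $2$ the pieces have marginal $2 < 3$, Greedy takes $y$, and the remaining budget $1$ admits neither piece, so the winner $\{y\}$ avoids $A'$.

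The step I expect to be the main obstacle is making this coupling watertight, because it is entangled with tie-breaking: for $|B_v|$ and $\mathbbm{1}_{|B_v|>0}$ the pieces only tie the maximum under Greedy, so an adversarial tie-break could pack the budget with other items and starve $A'$ (and similarly a Max-rule tie could avoid $B''$). I would resolve this with the paper's standing convention that tie-breaking is fixed in advance, letting the pieces inherit the tie-breaking priority of $a$, so that whenever the $E$-run commits to $a$ the $E'$-run commits to a piece; in the generic tie-free case no convention is needed, since there the pieces are the strict maximisers.
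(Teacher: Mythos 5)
Your proof is correct and takes essentially the same approach as the paper's: compare per-item marginals (or optimal totals) before and after the split, using the fact that each piece keeps the original item's approver set while becoming weakly cheaper, so values are preserved for $|B_v|$ and $\mathbbm{1}_{|B_v|>0}$, densities are preserved or improved for all proportional rules, and only the raw greedy marginal under $c(B_v)$ drops. One minor point in your favor: your counterexample for $\mathcal{R}^g_{c(B_v)}$ (costs $4$, $3$, limit $4$) is tie-free, whereas the paper's own example (two items of equal cost and value) relies on tie-breaking to have the split item selected in the original scenario; your explicit treatment of tie-breaking in the positive cases also makes precise what the paper handles only by its blanket convention of ignoring ties.
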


\begin{proof}
Intuitively,
$\mathcal{R}^g_{c(B_v)}$ does not satisfy Splitting Monotonicity as the new items' value is less than the original item's value. For other rules, the new items' value is at least as the original item's value, thus at least one is selected.
More formally,
let us denote the original item which is being splitted by $a$,
and the new items which $a$ is splitted into as $A = \{a_1, \ldots\}$;
we refer to $a$ is the \emph{original} item, and to items $a_i$ as the \emph{new} items.

For $\mathcal{R}^m_{\mathbbm{1}_{|B_v| > 0}}$,
$\mathcal{R}^g_{\mathbbm{1}_{|B_v| > 0}}$,
and for $\mathcal{R}^p_{\mathbbm{1}_{|B_v| > 0}}$,
observe that each of the new items covers the same number of voters as the original item.
Thus, if the original item made it to the budget, so will at least one of the new items,
and so Splitting Monotonicity follows.
Strong Splitting Monotonicity does not hold for $\mathcal{R}^m_{\mathbbm{1}_{|B_v| > 0}}$,
e.g., by considering a budgeting scenario with item $a$ and $b$,
with $a$ approved by some voters while $b$ is approved by other voters:
  While $a$ might be budgeted even if it takes the whole budget limit,
  after the splitting only one of the new items shall be budgeted, allowing more funds to be spent on $b$.
  
For $\mathcal{R}^m_{|B_v|}$,
$\mathcal{R}^g_{|B_v|}$,
and for $\mathcal{R}^p_{|B_v|}$,
observe that the value of each new item is the same as the value of the original item.
Thus, all of the new items will be budgeted, rendering these rules as Strong Splitting Monotone.

To see why the budgeting method $\mathcal{R}^m_{c(B_v)}$ satisfies Strong Splitting Monotonicity,
notice that the total satisfaction of budgeting the new items together is the same as budgeting the original item.
To see why the budgeting method $\mathcal{R}^p_{c(B_v)}$ satisfies Splitting Monotonicity,
consider the iteration in which the original item is selected,
and observe that any of the new items has the same value.
To see why the budgeting method $\mathcal{R}^g_{c(B_v)}$ does not satisfy Splitting Monotonicity,
consider, e.g., a budgeting scenario with items $a$, $b$,
with $c(a) = 3$, $c(b) = 3$.
Then, split $b$ into $b_1$, $b_2$, $b_3$, each of cost $1$,
consider one voter: $v_1 : \{a, b\}$, and budget limit $3$.
Since the value of $a$ is $3$, it would get selected in the splitted election,
while in the original election $b$ might be selected.
\end{proof}

\begin{definition}[Merging Monotonicity]
A budgeting method~$\mathcal{R}$ satisfies Merging Monotonicity
if for each budgeting scenario $E = (V, A, c, \ell)$, and for each
$A' \subseteq \mathcal{R}(E)$ such that for each $v \in V$ 
we either have  $v \cap A' = \emptyset$ or $A' \subseteq v$, it holds that
$a \in \mathcal{R}(E')$ for
$E' = (A \setminus A' \cup \{a\}, V', c', \ell)$,
$c'(a) = \sum_{a \in A'} c(a)$, and each voter $v \in V$ for which
$A' \subseteq v$ in $E$, approves $a$ in $E'$, and no other voter
approves~$a$.
\end{definition}

\begin{proposition}\label{proposition:mergingmonotonicityone}
  $\mathcal{R}^m_{\mathbbm{1}_{|B_v| > 0}}$,
  $\mathcal{R}^m_{c(B_v)}$,
  $\mathcal{R}^g_{\mathbbm{1}_{|B_v| > 0}}$,
  $\mathcal{R}^g_{c(B_v)}$,
  and
  $\mathcal{R}^p_{c(B_v)}$,
  satisfy Merging Monotonicity,
  while
  $\mathcal{R}^p_{\mathbbm{1}_{|B_v| > 0}}$,
  $\mathcal{R}^m_{|B_v|}$,
  $\mathcal{R}^g_{|B_v|}$,
  and
  $\mathcal{R}^p_{|B_v|}$ fail it.
\end{proposition}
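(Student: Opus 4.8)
The plan is to split the nine rules into those that satisfy Merging Monotonicity and those that fail it, and to drive all three approaches (Max, Greedy, Proportional greedy) from a single observation about how merging a bundle $A'$ into one item $a$ changes satisfaction. Write $V_1$ for the voters who approve all of $A'$, i.e.\ exactly those who approve $a$ in $E'$; by the precondition these are also exactly the voters approving any single item of $A'$, so each item of $A'$ \emph{and} the merged item $a$ has precisely $|V_1|$ approvers. The quantity that decides everything is the satisfaction a $V_1$ voter gets from $a$ versus from the bundle: for $c(B_v)$ it is unchanged (because $c(a)=c(A')$), for $\mathbbm{1}_{|B_v|>0}$ it is unchanged (the voter is covered either way), but for $|B_v|$ it collapses from $|A'|$ to $1$. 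This trichotomy is exactly the satisfy/fail pattern to be proved.

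First I would treat the Max rules $\mathcal{R}^m_{\mathbbm{1}_{|B_v|>0}}$ and $\mathcal{R}^m_{c(B_v)}$. Define $\phi$ sending a budget $B'$ of $E'$ to $(B'\setminus\{a\})\cup A'$ when $a\in B'$ and to $B'$ otherwise; this is a cost-preserving bijection onto the budgets of $E$ that contain either all of $A'$ or none of it. The central claim is that $\phi$ preserves total satisfaction for both $\mathbbm{1}_{|B_v|>0}$ and $c(B_v)$, which follows from the previous paragraph together with the facts that voters outside $V_1$ are untouched and that a $V_1$ voter's satisfaction from items outside $A'$ is unchanged. Granting this, if the winning budget $B^\star=\mathcal{R}^m(E)$ contains $A'$, then $(B^\star\setminus A')\cup\{a\}$ achieves the same total satisfaction in $E'$ and contains $a$, and since $\phi$ shows no budget of $E'$ can exceed it, it is winning; hence $a\in\mathcal{R}^m(E')$. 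The same satisfaction-preservation underlies the positive Greedy and Proportional-greedy cases: under $c(B_v)$ the merged item has marginal value $c(a)\cdot|V_1|$, which dominates that of any single original item, and value-per-cost exactly $|V_1|$; under $\mathbbm{1}_{|B_v|>0}$ the merged item covers $V_1$ with the same marginal gain as the first selected item of $A'$. In each case I would compare the runs on $E$ and $E'$ at the iteration where the first item of $A'$ is (or would be) chosen and argue that $a$ is selected in $E'$.

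For the failing rules I would give explicit counterexamples, all powered by the value-collapse of $|B_v|$ (for $\mathcal{R}^m_{|B_v|}$, $\mathcal{R}^g_{|B_v|}$, $\mathcal{R}^p_{|B_v|}$) and by the cost-inflation of coverage (for $\mathcal{R}^p_{\mathbbm{1}_{|B_v|>0}}$). For $\mathcal{R}^m_{|B_v|}$ a clean instance uses three unit-cost items approved by a single voter and one item $z$ of cost $3$ approved by two voters, with $\ell=3$: the unique winner is the three-item bundle (value $3$), but after merging into $a$ (cost $3$, value $1$) the unique winner becomes $\{z\}$ (value $2$), so $a\notin\mathcal{R}^m(E')$. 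For $\mathcal{R}^p_{\mathbbm{1}_{|B_v|>0}}$ the mechanism is that merging raises the cost of covering $V_1$ without raising the coverage, so the merged item's value-per-cost falls below that of a competitor; I would arrange that the cheap items of $A'$ are all funded in $E$ (the redundant ones entering only after coverage is saturated) while in $E'$ the expensive merged item loses the density comparison to a competitor that was previously unaffordable.

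The step I expect to be the main obstacle is the Greedy/Proportional-greedy bookkeeping, where the clean $\phi$-argument no longer applies directly. Under $|B_v|$, and under $c(B_v)$ in the proportional variant, the per-item marginal value (respectively value-per-cost) of the merged item equals that of each original item, so whether the rule keeps or drops $a$ is decided not by value but by the interplay of affordability with tie-breaking across iterations. Getting the positive cases ($\mathcal{R}^g_{c(B_v)}$, $\mathcal{R}^p_{c(B_v)}$, $\mathcal{R}^g_{\mathbbm{1}_{|B_v|>0}}$) and the negative ones ($\mathcal{R}^g_{|B_v|}$, $\mathcal{R}^p_{|B_v|}$) to come out correctly thus requires pinning down exactly when an item of equal priority but strictly larger cost can be crowded out of the budget, and this affordability-and-tie-breaking analysis is the delicate part of the argument.
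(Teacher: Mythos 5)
Your skeleton is essentially the paper's: the same trichotomy (satisfaction preserved under merging for $c(B_v)$ and $\mathbbm{1}_{|B_v|>0}$, collapsed for $|B_v|$) drives everything; your bijection $\phi$ is an explicit form of the paper's exchange/contrapositive argument for $\mathcal{R}^m_{\mathbbm{1}_{|B_v|>0}}$ and $\mathcal{R}^m_{c(B_v)}$, and it is correct; your plan of comparing the runs on $E$ and $E'$ at the iteration where the first item of $A'$ is chosen is exactly the paper's argument for the positive greedy cases; your counterexample for $\mathcal{R}^m_{|B_v|}$ (three unit-cost items approved by one voter versus a cost-$3$ item approved by two voters, $\ell=3$) is valid, and your mechanism for $\mathcal{R}^p_{\mathbbm{1}_{|B_v|>0}}$ is the paper's one-liner, fleshed out correctly. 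Also, the affordability worry you raise for the \emph{positive} greedy cases resolves cleanly: since all of $A'$ is funded in $E$, the remaining budget at the iteration where the first item of $A'$ is selected is at least $c(A')=c(a)$, so the merged item is affordable exactly when needed; modulo the paper's blanket "we ignore tie-breaking," those cases go through as you sketched.

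The genuine gap is that you never exhibit the failures of $\mathcal{R}^g_{|B_v|}$ and $\mathcal{R}^p_{|B_v|}$, deferring them to the "affordability-and-tie-breaking analysis." For $\mathcal{R}^p_{|B_v|}$ this is easy to close, because merging \emph{strictly} lowers value-per-cost from $|V_1|/c(a_i)$ to $|V_1|/c(A')$: take $a_1,a_2,a_3$ of unit cost approved by three common voters (ratio $3$), a competitor $z$ of cost $3$ approved by four other voters (ratio $4/3$), and $\ell=3$; in $E$ the three cheap items are taken and $z$ never fits, while in $E'$ the ratio comparison $4/3>1$ lets $z$ exhaust the budget and exclude the merged item, with no ties anywhere. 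But for $\mathcal{R}^g_{|B_v|}$ your flagged obstacle is not a technicality---it is an obstruction: the merged item has exactly the same approval count $|V_1|$ as each item of $A'$, and by the affordability observation above, whenever no \emph{other} item has exactly $|V_1|$ approvals the merged item is necessarily affordable and selected when its level is reached; a failure can only be manufactured by tie-breaking between the merged item and a distinct competitor with equal approval count. Notably, the paper itself stumbles precisely here: its single counterexample (costs $1,1,1,2,2$, limit $4$, merging the three unit-cost items) refutes Merging Monotonicity for $\mathcal{R}^m_{|B_v|}$, but \emph{not} for $\mathcal{R}^g_{|B_v|}$, since in $E'$ the merged item has three approvers against the competitors' two, so greedy selects it first and the axiom is not violated. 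So your plan cannot be completed for $\mathcal{R}^g_{|B_v|}$ without an explicit stance on tie-breaking, and the same defect is present, unacknowledged, in the paper's own proof.
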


\begin{proof}
Intuitively,
for the satisfaction function $f(A_v, B) = |B_v|$,
the value of the original items decreases but the merged item is as expensive.
For $f(A_v, B) = c(B_v)$,
the value of the merged item equals the total value of the merged items.
For $f(A_v, B) = \mathbbm{1}_{|B_v| > 0}$ the fact that all the original items were budgeted
means that the merged item still satisfies the same voters.

More formally,
we refer to the items of $A'$ in the original budgeting scenario $E$ as the \emph{original} items,
and to the new item $a$ in the modified budgeting scenario $E'$ as the \emph{merged} item.

To see why $\mathcal{R}^m_{\mathbbm{1}_{|B_v| > 0}}$ and $\mathcal{R}^m_{c(B_v)}$ satisfy Merging Monotonicity,
assume, counterpositively, that $B$ is a winning budget of $E$ with $A' \subseteq B$,
but that no winning budget of $E'$ contains $a$; so there is some $B'$ which is winning in $E'$ but $a \notin B'$.
But then, $B'$ would achieve a strictly higher total satisfaction in $E'$,
which is equal to its total satisfaction in $E$,
and so $B'$ would be winning in $E$ as well, contradicting the assumption.

To see why $\mathcal{R}^g_{\mathbbm{1}_{|B_v| > 0}}$ and $\mathcal{R}^p_{c(B_v)}$ satisfy Merging Monotonicity,
consider the iteration in which one of the items of $A'$ is selected,
and observe that, in this iteration, $a$ would be selected,
as it gets the same increase of the total satisfaction.

To see why $\mathcal{R}^g_{c(B_v)}$ satisfies Merging Monotonicity,
consider again the iteration in which one of the items of $A'$ is selected,
and observe that $a$ would get even higher increase of the total satisfaction,
thus will be selected in this iteration as well.
To see why $\mathcal{R}^p_{\mathbbm{1}_{|B_v| > 0}}$ do not satisfy Merging Monotonicity,
observe that the proportional value increase of the merged item is smaller then that of the original items.

To see why $\mathcal{R}^m_{|B_v|}$,
$\mathcal{R}^g_{|B_v|}$,
and $\mathcal{R}^p_{|B_v|}$,
do not satisfy Merging Monotonicity,
consider a budgeting scenario with items $a$, $b$, $c$, $d$, and $e$,
with $c(a) = c(b) = c(c) = 1$ and $c(d) = c(e) = 2$.
Let the budget limit be $4$ and let there be three voters:
  $v_1 : \{a, b, c, d, e\}$;
  $v_2 : \{a, b, c, d, e\}$;
  $v_3 : \{a, b, c\}$.
Then, the winning budget is $\{a, b, c\}$,
while if we merge $a$, $b$, and $c$, into one item,
then $\{d, e\}$ would be the winning budget.
\end{proof}

\section{Experiments on Budgeting Methods}

\newcommand{\heighttt}{1.3cm}
\newcommand{\experimentoneoneline}[1]{%
$#1$ &
\includegraphics[height=\heighttt]{PLOTS/experimentone/HIST_MaxRuleNumberOfBudgetedItems_#1} &
\includegraphics[height=\heighttt]{PLOTS/experimentone/HIST_GreedyRuleNumberOfBudgetedItems_#1} &
\includegraphics[height=\heighttt]{PLOTS/experimentone/HIST_ProportionalGreedyRuleNumberOfBudgetedItems_#1} &
\includegraphics[height=\heighttt]{PLOTS/experimentone/HIST_MaxRuleOneIfSomethingIsBudgeted_#1} &
\includegraphics[height=\heighttt]{PLOTS/experimentone/HIST_GreedyRuleOneIfSomethingIsBudgeted_#1} &
\includegraphics[height=\heighttt]{PLOTS/experimentone/HIST_ProportionalGreedyRuleOneIfSomethingIsBudgeted_#1} &
\includegraphics[height=\heighttt]{PLOTS/experimentone/HIST_MaxRuleTotalBudgetedCost_#1} &
\includegraphics[height=\heighttt]{PLOTS/experimentone/HIST_GreedyRuleTotalBudgetedCost_#1} &
\includegraphics[height=\heighttt]{PLOTS/experimentone/HIST_ProportionalGreedyRuleTotalBudgetedCost_#1}
}

\begin{table*}[t]
\begin{center}
\resizebox{\textwidth}{!}{\begin{tabular}{c c c c c c c c c c}
$x$
&
$\mathcal{R}^m_{|B_v|}$ & $\mathcal{R}^g_{|B_v|}$ & $\mathcal{R}^p_{|B_v|}$ & 
$\mathcal{R}^m_{\mathbbm{1}_{|B_v| > 0}}$ & $\mathcal{R}^g_{\mathbbm{1}_{|B_v| > 0}}$ & $\mathcal{R}^p_{\mathbbm{1}_{|B_v| > 0}}$ & 
$\mathcal{R}^m_{c(B_v)}$ & $\mathcal{R}^g_{c(B_v)}$ & $\mathcal{R}^p_{c(B_v)}$
\vspace{3px}
\\
\experimentoneoneline{10} \\
\experimentoneoneline{30} \\
\experimentoneoneline{90} \\
\experimentoneoneline{190} \\
\end{tabular}}
\end{center}
\caption{%
  Results of Experiment $1$.
}
\label{figure:experimentone}
\end{table*}

\newcommand{\moneyspeaksoneline}[1]{%
$#1$ &
\includegraphics[height=\heighttt]{PLOTS/experimenttwo/globalattraction#1/HIST_MaxRuleNumberOfBudgetedItems} &
\includegraphics[height=\heighttt]{PLOTS/experimenttwo/globalattraction#1/HIST_GreedyRuleNumberOfBudgetedItems} &
\includegraphics[height=\heighttt]{PLOTS/experimenttwo/globalattraction#1/HIST_ProportionalGreedyRuleNumberOfBudgetedItems} &
\includegraphics[height=\heighttt]{PLOTS/experimenttwo/globalattraction#1/HIST_MaxRuleOneIfSomethingIsBudgeted} &
\includegraphics[height=\heighttt]{PLOTS/experimenttwo/globalattraction#1/HIST_GreedyRuleOneIfSomethingIsBudgeted} &
\includegraphics[height=\heighttt]{PLOTS/experimenttwo/globalattraction#1/HIST_ProportionalGreedyRuleOneIfSomethingIsBudgeted} &
\includegraphics[height=\heighttt]{PLOTS/experimenttwo/globalattraction#1/HIST_MaxRuleTotalBudgetedCost} &
\includegraphics[height=\heighttt]{PLOTS/experimenttwo/globalattraction#1/HIST_GreedyRuleTotalBudgetedCost} &
\includegraphics[height=\heighttt]{PLOTS/experimenttwo/globalattraction#1/HIST_ProportionalGreedyRuleTotalBudgetedCost}
}

\begin{table*}[t]
\begin{center}
\resizebox{\textwidth}{!}{\begin{tabular}{c c c c c c c c c c}
$x$
&
$\mathcal{R}^m_{|B_v|}$ & $\mathcal{R}^g_{|B_v|}$ & $\mathcal{R}^p_{|B_v|}$ & 
$\mathcal{R}^m_{\mathbbm{1}_{|B_v| > 0}}$ & $\mathcal{R}^g_{\mathbbm{1}_{|B_v| > 0}}$ & $\mathcal{R}^p_{\mathbbm{1}_{|B_v| > 0}}$ & 
$\mathcal{R}^m_{c(B_v)}$ & $\mathcal{R}^g_{c(B_v)}$ & $\mathcal{R}^p_{c(B_v)}$
\vspace{3px}
\\
\moneyspeaksoneline{0} \\
\moneyspeaksoneline{10} \\
\moneyspeaksoneline{50} \\
\moneyspeaksoneline{70} \\
\moneyspeaksoneline{100} \\
\end{tabular}}
\end{center}
\caption{%
  Results of Experiment $2$.
}
\label{figure:experimenttwo}
\end{table*}

\begin{table}[t]
\begin{center}
\begin{tabular}{c}
\includegraphics[width=5.7cm]{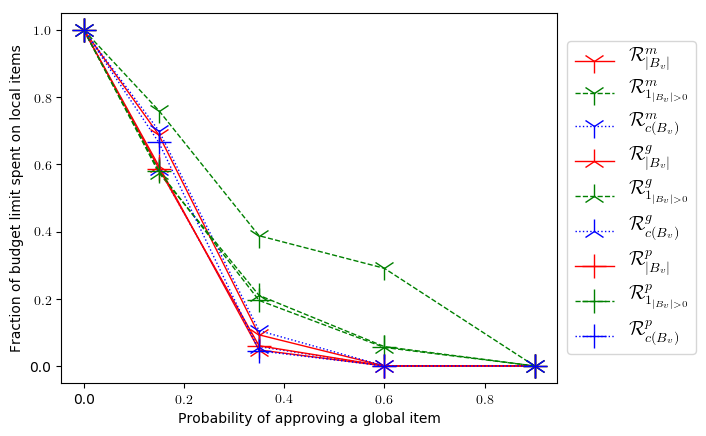} \\
\includegraphics[width=5.7cm]{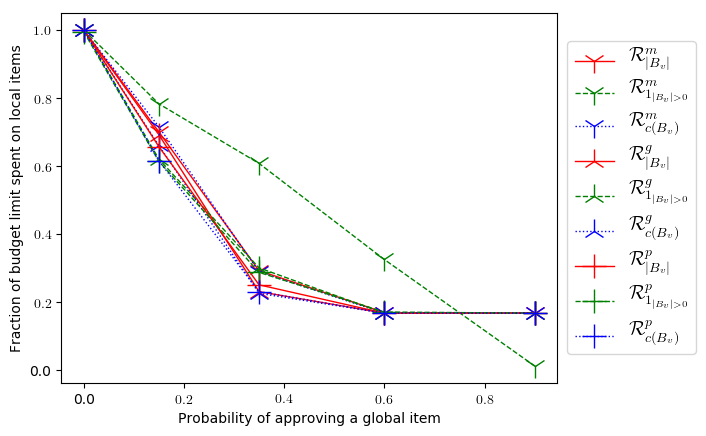} \\
\includegraphics[width=5.7cm]{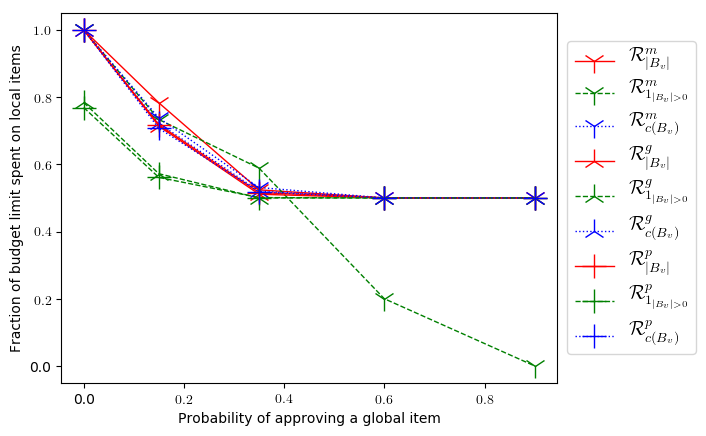} \\
\end{tabular}
\end{center}
\caption{%
  Results of Experiment $3$.
  The top, middle, and bottom figure are for $\ell = 20$, $30$, and $50$, respectively.
}
\label{figure:experimentthree}
\end{table}

In this section we report on three experiments:
  Two experiments generalize the technique of Elkind et al.~\shortcite{elkind2017multiwinner},
used quite extensively for multiwinner elections~\cite{faliszewski2018optimization,aziz2018egalitarian,faliszewski2018between},
to the setting of approval-based budgeting scenarios.
  One experiment focuses on the issue of global and local items.

In each experiment we consider the $2$-dimensional Euclidean domain,
where both voters and items correspond to ideal points on a $2$-dimensional plane;
for clarity, we set this $2$-dimensional plane to be of width $1$ and height $1$,
where $(0, 0)$, $(0, 1)$, $(1, 0)$, and $(1, 1)$, denote,
respectively, the left-bottom point, the left-top point, the right-bottom point, and the right-top point.
Concretely, each particular simulation setting consists of:
  (1) a distribution of the ideal points of the voters;
  (2) a distribution of the ideal points of the items;
  (3) a distribution of the item costs;
  (4) a budget limit;
  and
  (5) a threshold function which, based on the positions and costs, creates approval sets for the voters.

For the first two experiments,
for each particular simulation setting we generate several corresponding budgeting scenarios by sampling from these distributions, compute the winning budget in each of them, and aggregate the results into a $2$-dimensional histogram.
These histograms are formed by first partitioning the square from $(0, 0)$ to $(1, 1)$ into bins (we use $50 \times 50$ bins).
Then, we draw a pixel in each of these bins, where the more funds spent on this bin, the brighter the pixel is drawn.
Specifically, denoting the total funds used by $y$ and the funds used in a certain bin by $x$,
we normalize these values using the formula
$\frac{\arctan(\nicefrac{x}{0.0005 * y})}{\nicefrac{\pi}{2}}\ .$

\mypara{Experiment $\boldsymbol{1}$ and $\boldsymbol{2}$ (depicted in Table~\ref{figure:experimentone} and~\ref{figure:experimenttwo}, resp.).}
We describe Experiment $1$ and $2$ together, as they share a lot in common.
We have:
\begin{enumerate}

\item
voters, positioned uniformly on a disc of radius $0.3$, centered at position $(0.5, 0.5)$;
we have $50$ such voters for Experiment $1$ and $100$ for Experiment $2$;

\item
$50$ \emph{cheap} items, positioned uniformly on a disc of radius $0.2$, centered at $(0.3, 0.5)$,
and $50$ \emph{expensive} items, positioned uniformly on a disc of radius $0.2$, centered at $(0.7, 0.5)$;

\item
the cheap items cost $10$ each,
while the expensive items cost $100$ each for Experiment $2$,
and for Experiment $1$, we use a parameter $x$ for the cost of the expensive items (so Table~\ref{figure:experimentone} shows histograms for various values of $x$);

\item
the budget limit is $1000$ for Experiment $1$ and $200$ for Experiment $2$;

\item
for Experiment $1$:
  Each voter approves the $10$ items which are the closest to her;
for Experiment $2$:
  The approval sets of the voters are generated with respect to a parameter $x$, as follows:
  for each cheap item,
  we identify the $5$ voters which are the closest to it, and add the item to their approval sets;
  for each expensive item,
  we identify the $x$ voters which are the closest to it, and add the item to their approval sets.

\end{enumerate}
%
%
Each histogram is an aggregation of $100$ single elections.

\smallskip

Informally speaking,
the approval sets in Experiment $1$ are generated ``from the point-of-view of the voters'',
while in Experiment $2$ they are generated ``from the point-of-view of the items''.

Experiment $3$ focuses on the issue of global and local items:
  Consider, e.g., a budgeting scenario with some city-level projects and some neighborhood-level projects. Ideally, we would want some mix of city-level projects and neighborhood-level projects to be budgeted. While it is possible to achieve some mix artificially (e.g., requiring voters to select both city-level projects and neighborhood-level projects), here we are interested in finding out the natural such mix that rules in our framework achieve.

\mypara{Experiment 3 (depicted in Table~\ref{figure:experimentthree}).}
We have:
  (1) $20$ voters, positioned uniformly on the whole $1\times 1$ square; 
  (2) $5$ items, termed \emph{global items}, which are also positioned uniformly on the square; 
  and another $30$ items, termed \emph{local items}, 
  also positioned uniformly on the square; 
  (3) each global item and each local item costs $5$;
  (4) we vary the budget limit between $20$ and $50$;
  (5) the approval sets of the voters are populated with respect to a parameter $p$, as follows:
  For each pair of a voter and a global item,
  we let the voter approve the global item with some probability $p$.
  For each pair of a voter and a local item,
  we let the voter approve the local item if and only if their Euclidean distance is at most $0.2$.
In Table~\ref{figure:experimentthree} each datapoint is averaged over $100$ repetitions
and we consider the average funds spent on local items as a function of the probability $p$ of approving a global item.

\subsection{Experimental Results}

Next we discuss the results of our experiments, depicted in Tables~\ref{figure:experimentone},~\ref{figure:experimenttwo},
and~\ref{figure:experimentthree}.

\newcommand{\rc}{$\mathcal{R}^m_{c(B_v)}$\xspace}
\newcommand{\rn}{$\mathcal{R}^m_{|B_v|}$\xspace}
\newcommand{\rcc}{$\mathcal{R}^m_{\mathbbm{1}_{|B_v| > 0}}$\xspace}

In Table~\ref{figure:experimentone},
and focusing on the Max rules,
it is visible that $\mathcal{R}^m_{c(B_v)}$ prefers expensive items the most,
$\mathcal{R}^m_{|B_v|}$ prefers expensive items the least,
while $\mathcal{R}^m_{\mathbbm{1}_{|B_v| > 0}}$ is somehow in between.
This can be seen specifically by noticing that
\rn ceases to select expensive items as soon as $x = 30$,
\rcc ceases to select expensive items only when $x = 190$,
while \rc keeps on selecting expensive items even when $x = 190$.
This is intuitively appealing,
as expensive items are useful for \rc and not useful for \rn;
for \rcc, while the costs of the items are not useful,
their positions are, as the rule's goal is to as many voters as possible.
The greedy rules are also somehow in between the extremes,
while, as expected, the proportional greedy rules prefer cheaper items.

This general behavior is consistent with Experiment $2$,
as can be seen in Table~\ref{figure:experimenttwo}.
Specifically,
observe that \rc switches to budgeting only expensive items as soon as $x = 10$,
while \rn switches to budgeting only expensive items only when $x = 70$;
due to the positions of the expensive items,
and due to the fact that they cover different sets of voters,
\rcc interleave cheap items with expensive items (except for the corner cases of $x = 0$ and $x = 100$).

As for locality issues,
the result depicted in Table~\ref{figure:experimentthree} for $\ell = 20$ demonstrate that \rc starts to consider global items first,
with \rn after it, and \rcc being the last to consider global items.
As $\ell$ increases,
\rc and \rn select more local items while \rcc is not affected.

One particularly interesting observation from our experiments is that the greedy rule used by
Goel~\shortcite{goel2015knapsack}, $\mathcal{R}^g_{|B_v|}$ in our language, behaves substantially differently from its Max variant, \rn. This is quite visible in the first two columns of Tables~\ref{figure:experimentone} and~\ref{figure:experimenttwo}. The Max variant gives much more attention to the cheap items than the greedy one. Thus the choice between
these two rules---already made by many users of Goel's work---may have nonnegligible consequences.

\section{Outlook}

We have defined a framework for approval-based budgeting methods and studied nine rules within it,
considered their computational and axiomatic properties, and reported on simulations to evaluate them experimentally.
Our framework, and the axiomatic properties we consider, can be used to better evaluate known budgeting methods,
as well as propose new budgeting methods, which might prove to have better theoretical guarantees and better practical behavior.
E.g., our results show that, while $\mathcal{R}^g_{|B_v|}$ is used extensively in practice (see, e.g.,~\cite{goel2015knapsack}),
it produces significantly different results than the rule which it approximates, namely $\mathcal{R}^m_{|B_v|}$.

An immediate future research direction would be to study more satisfaction functions and ways of using them,
which would correspond to more rules within our framework.
Furthermore,
defining and studying more axiomatic properties which are relevant to budgeting methods,
as well as performing more extensive experimental analysis on various budgeting methods would help in better understanding these rules.

Another avenue for future research is to seek general results for rules in the framework,
such as identifying classes of rules within the framework which satisfy certain axiomatic properties,
and better understanding which budgeting methods reside within the framework and which do not.

\bibliographystyle{plain}
\bibliography{bib}

\appendix
\clearpage

\end{document}